\newcommand{\keywords}[1]{\par\addvspace\baselineskip
\noindent\keywordname\enspace\ignorespaces#1}
\begin{document}

\mainmatter  

\title{Bounded Multivariate Surfaces On Monovariate Internal Functions}

\titlerunning{Bounded Multivariate Surfaces On Monovariate Internal Functions}

%
%
\author{Shriprakash Sinha \and Gert J. ter Horst%
\thanks{This work was conducted as a part of ongoing research in segmenting and labeling brain images at the Neuroimaging Center, in University Medical Center Groningen, The Netherlands. Shiprakash Sinha is with the Neuroimaging Center at University Medical Center Groningen, Antonius Deusinglaan 2, 9713 AW, Groningen, The Netherlands. Dr. Gert J. ter Horst is the General Director of the Neuroimaging Center, UMCG.}%
}
\authorrunning{Sinha and Horst}

\institute{Neuroimaging Center,  Antonius Deusinglaan 2 \\University
  Medical Center Groningen, 9713 AW, Groningen, The Netherlands\\
\mailsa\\
}

%
%

\maketitle

\begin{abstract}
Combining the properties of monovariate internal functions as proposed in Kolmogorov superimposition theorem, in tandem with the bounds wielded by the multivariate formulation of Chebyshev inequality, a hybrid model is presented, that decomposes images into homogeneous probabilistically bounded multivariate surfaces. Given an image, the model shows a novel way of working on reduced image representation while processing and capturing the interaction among the multidimensional information that describes the content of the same. Further, it tackles the practical issues of preventing leakage by bounding the growth of surface and reducing the problem sample size. The model if used, also sheds light on how the Chebyshev parameter relates to the number of pixels and the dimensionality of the feature space that associates with a pixel. Initial segmentation results on the Berkeley image segmentation benchmark indicate the effectiveness of the proposed decomposition algorithm.\keywords{Multivariate Inequality, Monovariate functions, Bounded Surfaces} 
\end{abstract}

\section{Introduction}
\label{sec:intro}

In order for an image to be decomposed, a proper representation of the image must first be done. One set of solutions for image representation is the decomposition of multivariate functions into monovariate functions as proposed by Kolmogorov superimposition theorem (KST) \cite{Sprecher:1972}. Sprecher et.al \cite{Sprecher:2002} have also proved that the monovariate internal functions obtained via the KST can be used to build space filling curves that sweep a multidimensional space. These 1D representations of the image can then be exploited for further processing using simple univariate or bivariate signal processing methods, as has been shown in \cite{Leni:2008} and \cite{Leni:2009}. It has further been proposed in \cite{Leni:2009} that either the space filling curves can be fixed and then construct external function whoes sum and compositions correspond to a multivariate function \cite{Sprecher:2002} or produce an algorithm that generates the internal function that adapts to the multivariate function and gives different space filling curve for different multivariate functions \cite{Igelnik:2003}. The work in this manuscript finds its motivation in presenting an intial hybrid model that uses a fixed space filling curve for an image and differs in employing mutivariate Chebyshev inequality on the $\mathcal{N}$ dimensional points lying on the curve, to yield homogeneous probabilistically bounded multivariate surfaces. \par
From the theory of space filling curves, it is known that the Hilbert Space Filling Curve (HSFC) \cite{Hilbert:1891} is the best in preserving the clustering properties while taking into account the locality of objects in a multidimensional space (\cite{Jagadish:1990}, \cite{Moon:2001}). Even though it can be applied to transform a multidimensional image representation into a linear format (\cite{Leni:2008}, \cite{Leni:2009}), the manuscript applies the HSFC to transform a $2$D matrix into $1$D space filling curve. The reason being that it saves the time and avoids the complexity in processing a $2$D matrix in comparison to $\mathcal{N}$D matrix. \par
Next, a multivariate formulation of the generalized Chebyshev inequality \cite{Chen:2007} is applied to decompose the image into surfaces bounded probabilistically via a single Chebyshev parameter. Since the bounded surfaces are constructed based on the interaction of a set of points in $\mathcal{R}^{\mathcal{N}}$ lying on the curve, it can be safely assumed that information about the nature or density of surface in a locality gets captured in these tiny patches. For example, RGB images from the Berkeley Segmentation Benchmark (BSB) \cite{Martin:2001} have been taken into consideration for the current study. In the case of a single RGB image, three dimensions exist in the colour map. These three form a feature set ($\mathcal{N} = 3$). \par
Several advantages arise with the use of this new hybrid model, namely: $\bullet$ Faster processing of the image on $1$D  compared to analysis of neighbourhood information per pixel. $\bullet$ Generation of homogeneous surfaces of different sizes that are bounded probabilistically, by inequality. $\bullet$ Leakage problem gets avoided due to conservative nature of inequality. $\bullet$ Reduction in problem sample size by a factor $\epsilon$ (Chebyshev parameter). $\bullet$ The density of image gets investigated via the multivariate formulation of the inequality. $\bullet$ The generalized hybrid model adapts to multivariate information while traversing on a $1$D fixed curve. \par
Hitherto, a brief description of the state of the work has been covered. In section \ref{sec:theory}, the theoretical aspect of the method is dealt in a greater detail with a toy image example. Experiments section \ref{sec:experiments} deals with the empirical evaluations conducted on BSB. Lastly, the conclusion follows in section \ref{sec:conclusion}. \par
\begin{figure}[!t]
\begin{center}
\includegraphics[width=8.5cm,height=6.5cm]{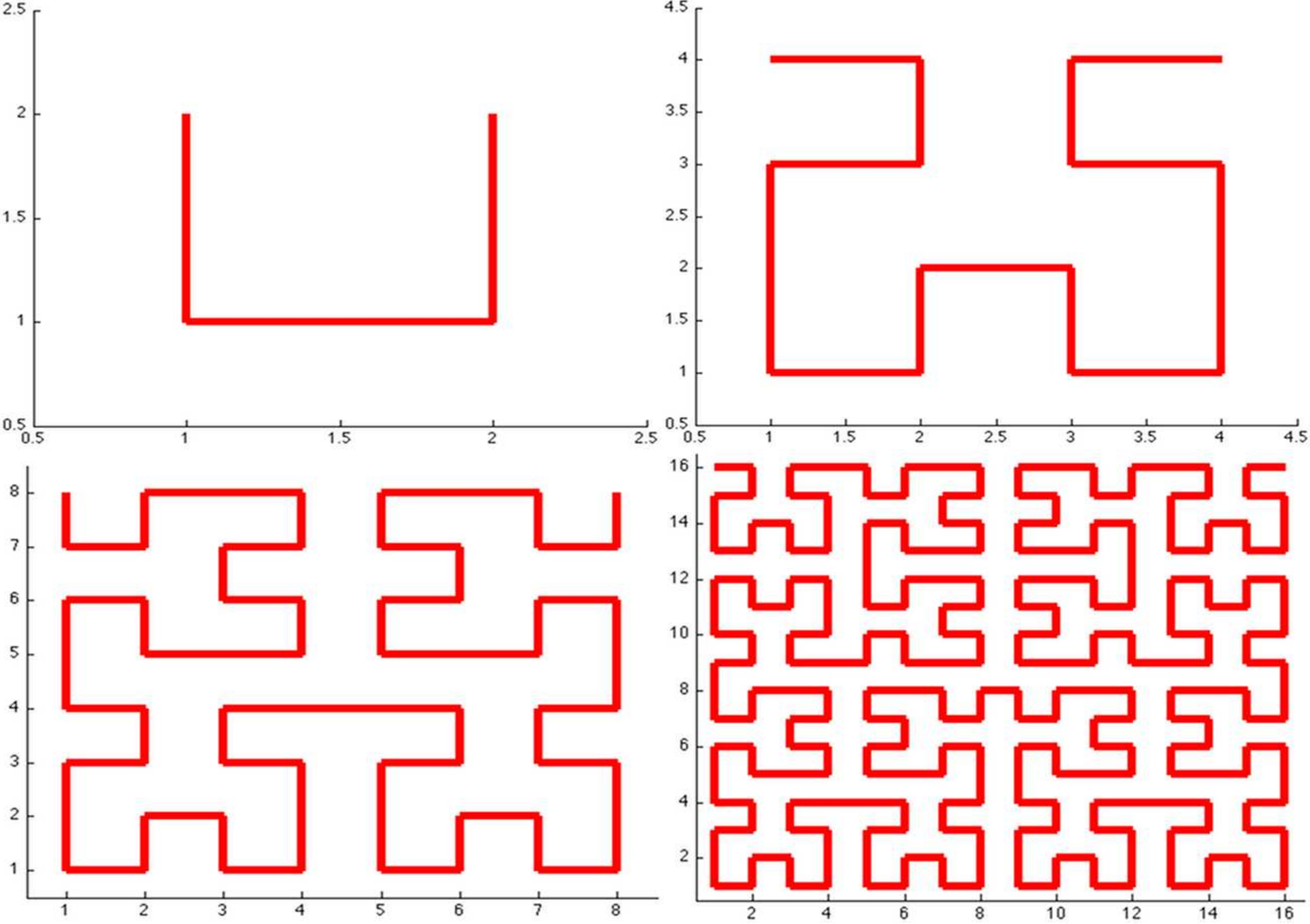}
\caption{Hilbert Space Filling Curve for grid of size $2$ (top left),
  $4$ (top right), $8$ (bottom left) and $16$ (bottom right).} 
\label{fig:hlbcrv}
\end{center}
\end{figure}
\begin{figure}[h]
\begin{center}
\includegraphics[width=8.5cm,height=10cm]{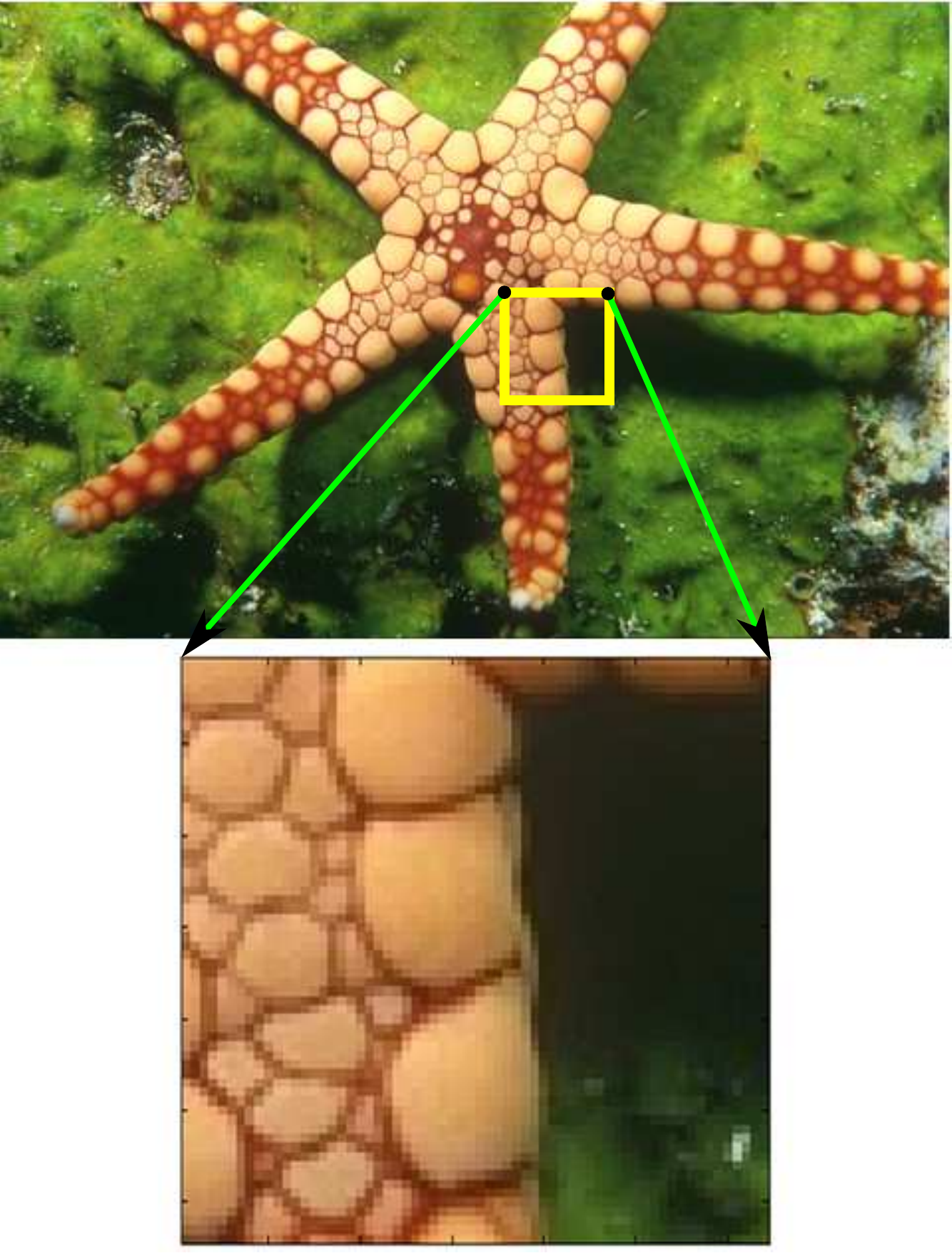}
\caption{Starfish image from \cite{Martin:2001} and the $64 \times 64$
  block under consideration.} 
\label{fig:starfish}
\end{center}
\end{figure}

\section{Theoretical Perspective}\label{sec:theory}
\subsection{Hilbert Space Filling Curve}\label{sec:hsfc}
The space filling curves form an important subject as it helps in transforming a multidimensional dataset into a linear format. This comes at a price of losing some amount of information, but the merits of preserving the local properties while transforming the objects in multi dimension to single dimension out weigh the incurred cost. The HSFC is a fractal filling curve proposed by \cite{Hilbert:1891} which fills the space of $2$D place in a continuous manner. Analytical results found in \cite{Moon:2001}, \cite{Jagadish:1990} and \cite{Gotsman:1996} prove the optimality of results obtained while using HSFC. In the current formulation a matlab implementation of \cite{Lau:1998} is used to generate the HSFC for $2$D matrices. \par
Figure \ref{fig:hlbcrv} shows the space filling curve for the grids of size $2$, $4$, $8$ and $16$ respectively. Note that the curve covers each and every point on the integer grid once while taking into account the local properties. It is not that the HSFC does not work for rectangular matrices, but the analysis of cluster preserving properties of the same becomes asymptotic in nature rather than being exact, as has been proved in \cite{Moon:2001}. Given an $\mathcal{N}$D image, the HSFC is generated which remains invariant of the same. \par
\begin{figure}[!t]
\begin{center}
\includegraphics[width=6cm,height=6cm]{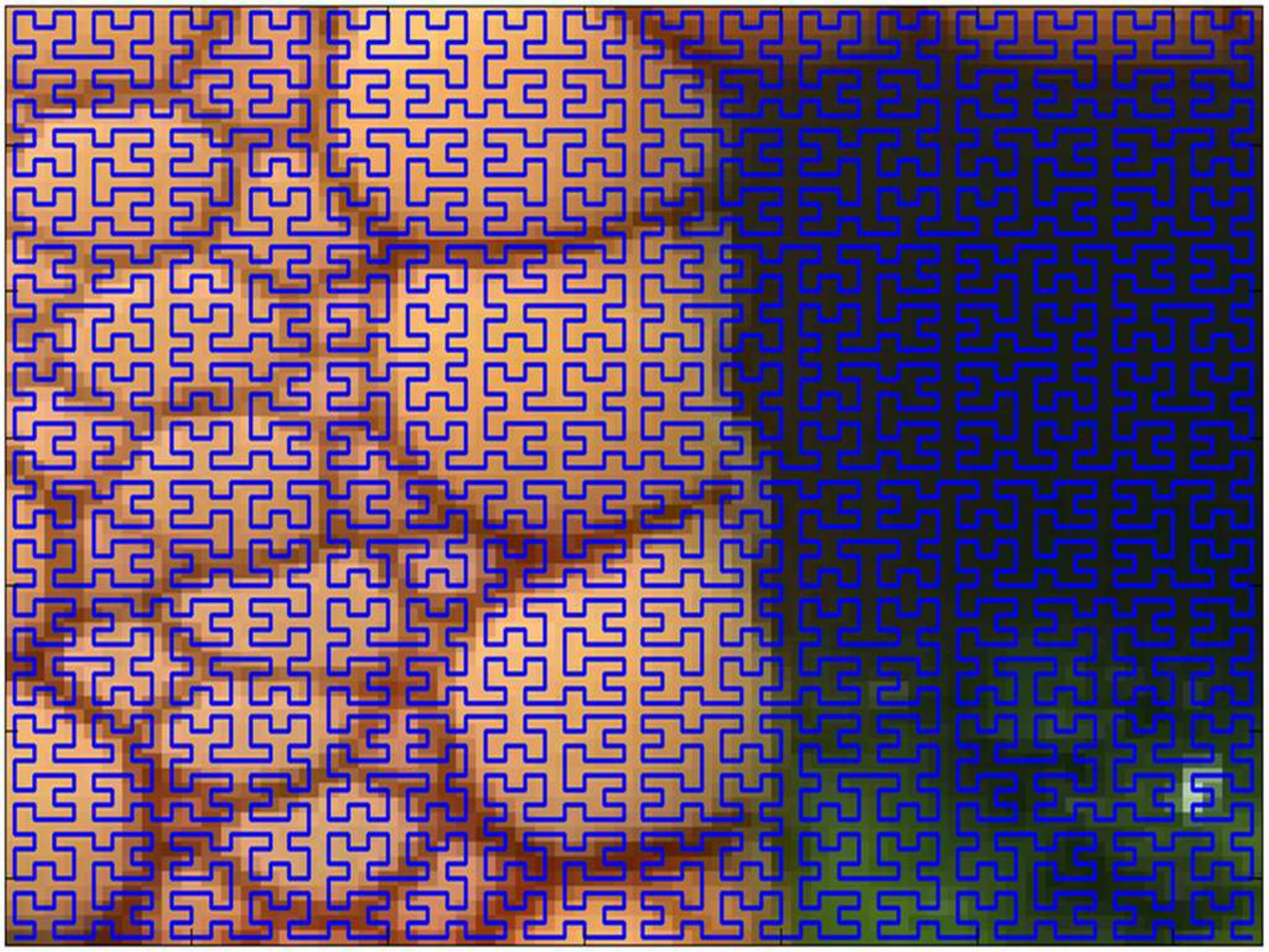}
\caption{HSFC on the patch of $64 \times 64$.} 
\label{fig:hlbstarfish}
\end{center}
\end{figure}

\subsection{Multivariate Chebyshev Inequality}\label{sec:mci}
Let $X$ be a stochastic variable in $\mathcal{N}$ dimensions with a mean $E[X]$. Further, $\Sigma$ be the covariance matrix of all observations, each containing $\mathcal{N}$ features and $\epsilon \in \mathcal{R}$, then the multivariate TChebyshev Inequality in \cite{Chen:2007} states that: 
\begin{eqnarray}
\mathcal{P} \{(X - E[X])^{T} \Sigma^{-1} (X - E[X]) \geq \epsilon\} & \leq &
\frac{\mathcal{N}}{\epsilon} \nonumber\\
\mathcal{P}\{(X - E[X])^{T} \Sigma^{-1} (X - E[X]) < \epsilon\} & \geq &
1 - \frac{\mathcal{N}}{\epsilon} \nonumber\\
\label{equ:ti}
\end{eqnarray}
i.e. the probability of the spread of the value of $X$ around the sample mean $E[X]$ being greater than $\epsilon$, is less than $\mathcal{N}/\epsilon$. There is a minor variation for the univariate case stating that the probability of the spread of the value of $x$ around the mean $\mu$ being greater than $\epsilon\sigma$ is less than $1/\epsilon^{2}$. Apart from the minor difference, both formulations convey the same message about the probabilistic bound imposed when a random vector or number $X$ lies outside the mean of the sample by a value of $\epsilon$. \par
In a broader perspective, the goal being to demarcate regions based on surfaces, two questions need to be addressed regarding the decomposition of image. $\bullet$ Which two pixels or their corresponding $\mathcal{N}$D vectors be selected to initialize a surface depicting \emph{near uniform behaviour}? $\bullet$ What should be the size of such a restricted surface? \par 

\subsection{Initializing Surface} 
The solution to first question would help in \emph{initializing a surface}. A pair of vectors in $\mathcal{N}$D will swap a flat plane with an angle subtended in between the two vectors. Given that the dot product exists in the higher dimensional plane, the cosine of the angle $\theta$ between the two vectors would suggest the degree of closeness between them. If $\vec{a}$ and $\vec{b}$ are two such vectors in $\mathcal{N}$D, then the degree of closeness is given by:
\begin{equation}
cosine \theta = \frac{<\vec{a}, \vec{b}>}{||\vec{a}||_{2}\times||\vec{b}||_{2}}
\label{equ:dot}
\end{equation}
were, $<\vec{a}, \vec{b}>$ is the dot product and the denominator contains the $2$-norm terms of both vectors. It is well known that the absolute value of  $cosine \theta$ tends to $1$ ($0$) as vectors tend to be nearly parallel (perpendicular). Let $\vec{u}$, $\vec{v}$ and $\vec{w}$ be three consecutive pixels on the HSFC. If the $cosine$ of the angle between $\vec{u}$ and $\vec{v}$ evaluates to an absolute value greater than a nearness threshold $Npar$ (say $0.95$) then the pair is considered as a valid surface. Note that $Npar$ is the nearness parameter which is used as a threshold to decide the degree of closeness of two pixels for forming a surface. If not, then $\vec{u}$ is left as a single point in $\mathcal{N}$D and the closeness criterion is checked for $\vec{v}$ and $\vec{w}$ (and the process is repeated). \par

\subsection{Size Of Surface}
Solving the second question shall define the \emph{range of the surface}. Once the start and end points (say $\vec{v}$ and $\vec{w}$ respectively) of a valid surface have been set, the size of the surface has to be determined. The size of the surface would constitute all points that contribute towards uniform surface behaviour in $\mathcal{N}$D. This degree of uniformity is controlled via the Chebyshev's Inequality. The idea is executed as follows: The next consecutive point (say $\vec{t}$) after $\vec{w}$ is considered for surface extent analysis. If the spread of surface point $\vec{t}$ from $E(surface)$ the mean of the \emph{existing surface} $[\vec{v}, \vec{w}]$, factored by the covariance matrix, is below $\epsilon$, then $\vec{t}$ is considered as a part of the surface and marked as a new ending point of the surface. Using Chebyshev's Inequality, it boils down to:
\begin{eqnarray}
\mathcal{P} \{(\vec{t} - E([\vec{v},\vec{w}])^{T} \Sigma^{-1} (\vec{t} -
E([\vec{v},\vec{w}]) \geq \epsilon\} & \leq &
\frac{\mathcal{N}}{\epsilon} \nonumber\\
\mathcal{P} \{(\vec{t} - E([\vec{v},\vec{w}])^{T} \Sigma^{-1} (\vec{t} -
E([\vec{v},\vec{w}]) < \epsilon\} & \geq & 1 -
\frac{\mathcal{N}}{\epsilon} \nonumber\\
\label{equ:ti_imp}
\end{eqnarray}
were $\Sigma$ is the covariance matrix between the $\mathcal{N}$D vectors constituting the initial surface. Satisfaction of this criterion leads to extension of the size of initial surface by one more point i.e. $\vec{t}$. The surface now constitutes $[\vec{v}, \vec{w}, \vec{t}]$, with $\vec{v}$ and $\vec{t}$ as start and end marker points. If not, the size of the surface remains as it is and a fresh start is made starting with $\vec{t}$ and the next consecutive point on the HSFC. The satisfaction of the inequality also gives a lower probabilistic bound on size of surface by a value of $1 - (\mathcal{N}/\epsilon)$, if the second version of the Chebyshev formula is under consideration. \par
The above formulation implies that when a homogeneous patch is encountered, then the a new point does not deviate much from the initial surface. Thus the size of the surface grows smoothly. For a highly irregular patch, the surface size may be very restricted due to high variation of a pixel from the surface it is being tested in the vicinity. Figure \ref{fig:starfish} shows the tiny patch ($64 \times 64$) of starfish under consideration and figure \ref{fig:hlbstarfish} shows the HSFC generated over the area of the image. \par
The tiny restricted surfaces generated using the multivariate and the univariate formulation of the Tchebyshev inequality for $\epsilon = 3$ are shown in figure \ref{fig:multi_e3} and \ref{fig:uni_e3}. Note how the surfaces differ due to the multivariate and univariate formulation of the inequality. The former takes into account the entire $\mathcal{N}$D vectors in tandem to compute the covariance or the texture interaction and the mean, while the later computes the inequality separately for each and every dimension. The different colours just indicate the different surfaces and has nothing to do with clustering at this stage. The potentiality of the method gets highlighted due to the bounded surfaces that have been obtained after image decomposition. This boundedness is checked via the parameter which determines the degree of intricacy to which the texture interaction is to be taken into account. It should be noted that in the univariate case the decision is made on the majority vote after $\mathcal{N}$ evaluations of the inequality on the space filling curve for a single pixel. What paves way is that the univariate formulation does not capture the  interaction which later leads to low grade
segmentation results compared to that given by the  formulation. These differences are apparent in the figures mentioned at the starting of the paragraph. Figure \ref{fig:ds_multi} and  \ref{fig:ds_uni} shows the image patch decomposed into surface patches for different $\epsilon$ values. \par
\begin{figure}[!t]
\begin{center}
\includegraphics[width=6cm,height=6cm]{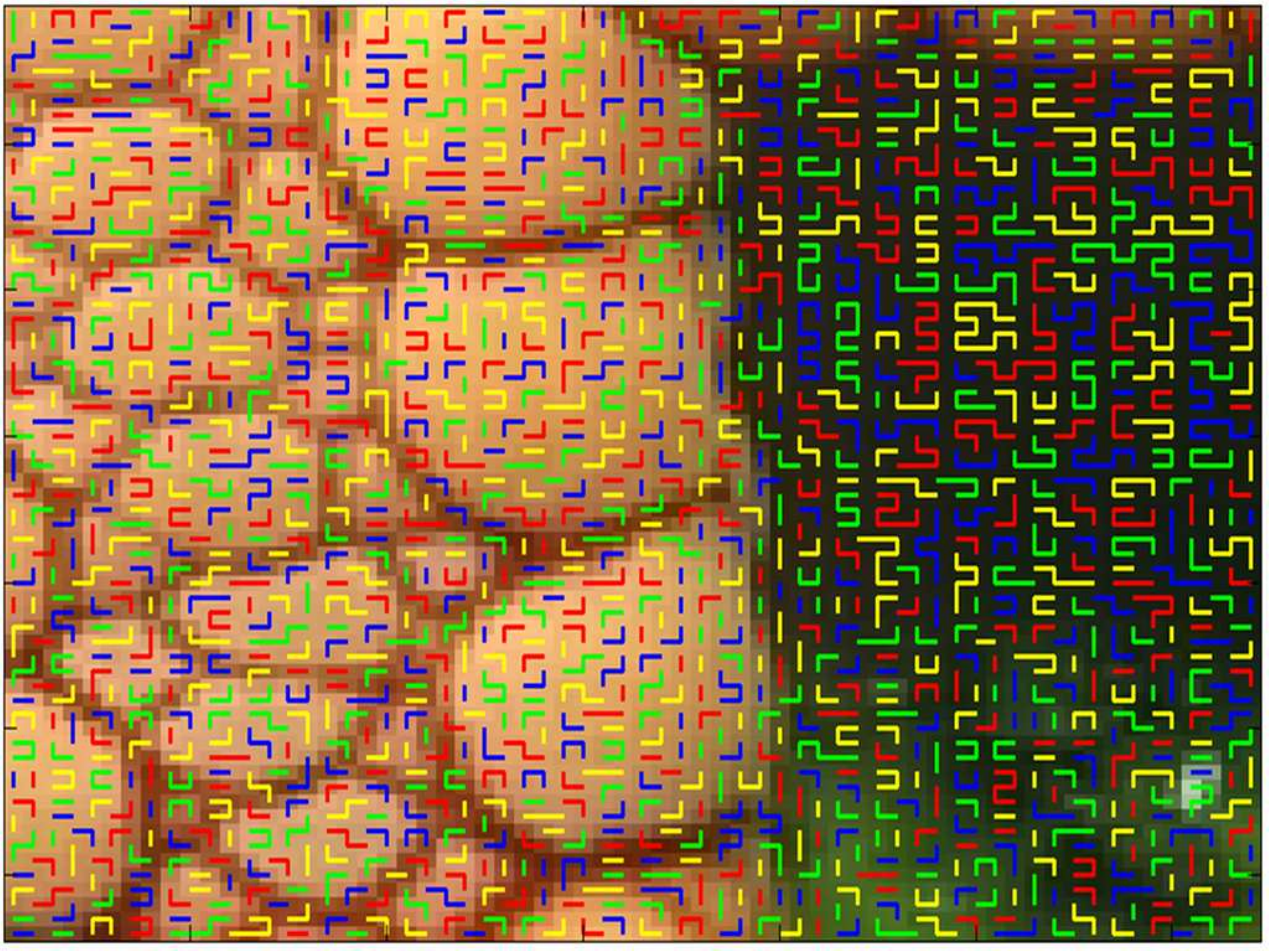}
\caption{Patch of $64 \times 64$ image of starfish from \cite{Martin:2001}, decomposed using the multivariate formulation of Chebyshev inequality with $\epsilon = 3$ and $Npar = 0.95$. A coloured line shows the pixels associated to a single bounded surface on the HSFC.} 
\label{fig:multi_e3}
\end{center}
\end{figure}
\begin{figure}[!t]
\begin{center}
\includegraphics[width=6cm,height=6cm]{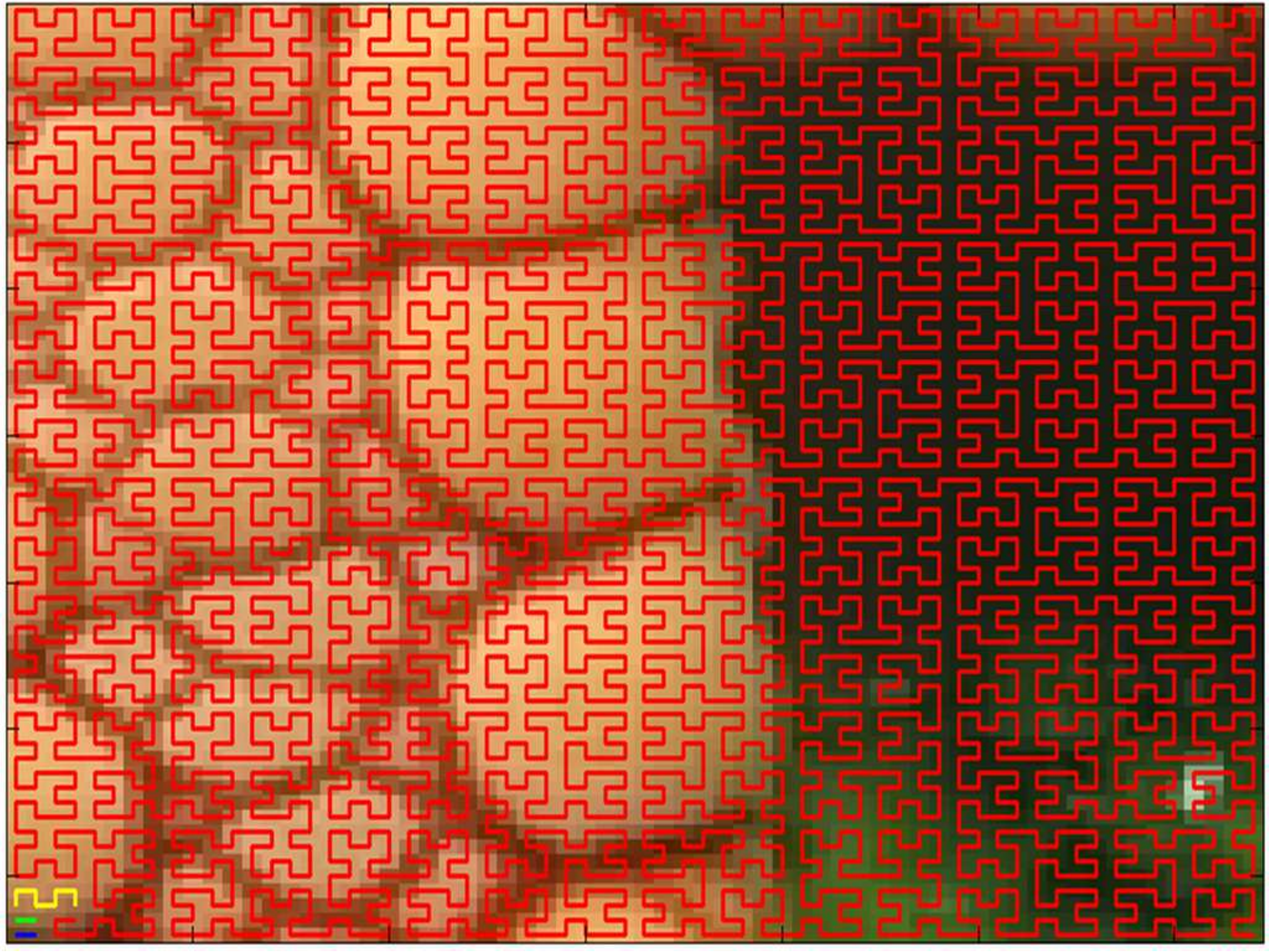}
\caption{Patch of $64 \times 64$ decomposed using the univariate formulation of Tchebyshev inequality with $\epsilon = 3$ and $Npar = 0.95$. A coloured line shows the pixels associated to a single bounded surface on the HSFC. Note that size of surface is decided based on voting across $\mathcal{N}$ evaluations.} 
\label{fig:uni_e3}
\end{center}
\end{figure}

\subsection{Implications}\label{sec:implications}
The inequality being a criterion, the probability associated with the same gives a belief based bound on the satisfaction of the criterion. This gives rise to certain simple implications as follows. Let $\mathcal{D}$ be a \emph{decomposition} which is equivalent to $(X_{t} - E[X])^{T} \Sigma^{-1} (X_{t} - E[X])$. Then: 
\begin{figure}[!t]
\begin{center}
\includegraphics[width=8.5cm,height=6cm]{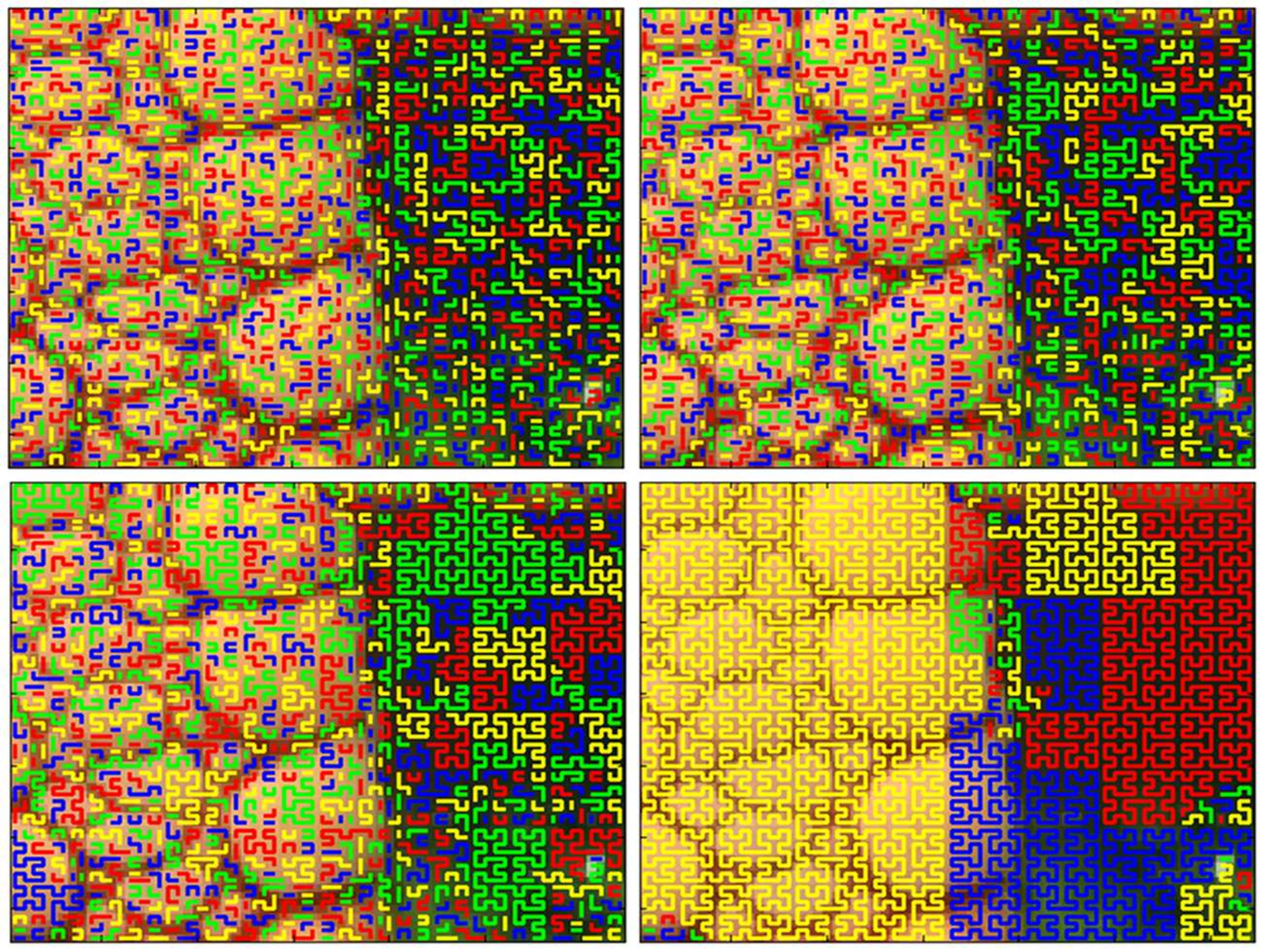}
\caption{Decomposed surfaces using multivariate inequality with
  $Npar = 0.95$ and for $\epsilon$ equal to $4$ (top left),
  $8$ (top right),$16$ (bottom left) and $32$ (bottom right).} 
\label{fig:ds_multi}
\end{center}
\end{figure}
\begin{figure}[!t]
\begin{center}
\includegraphics[width=8.5cm,height=6cm]{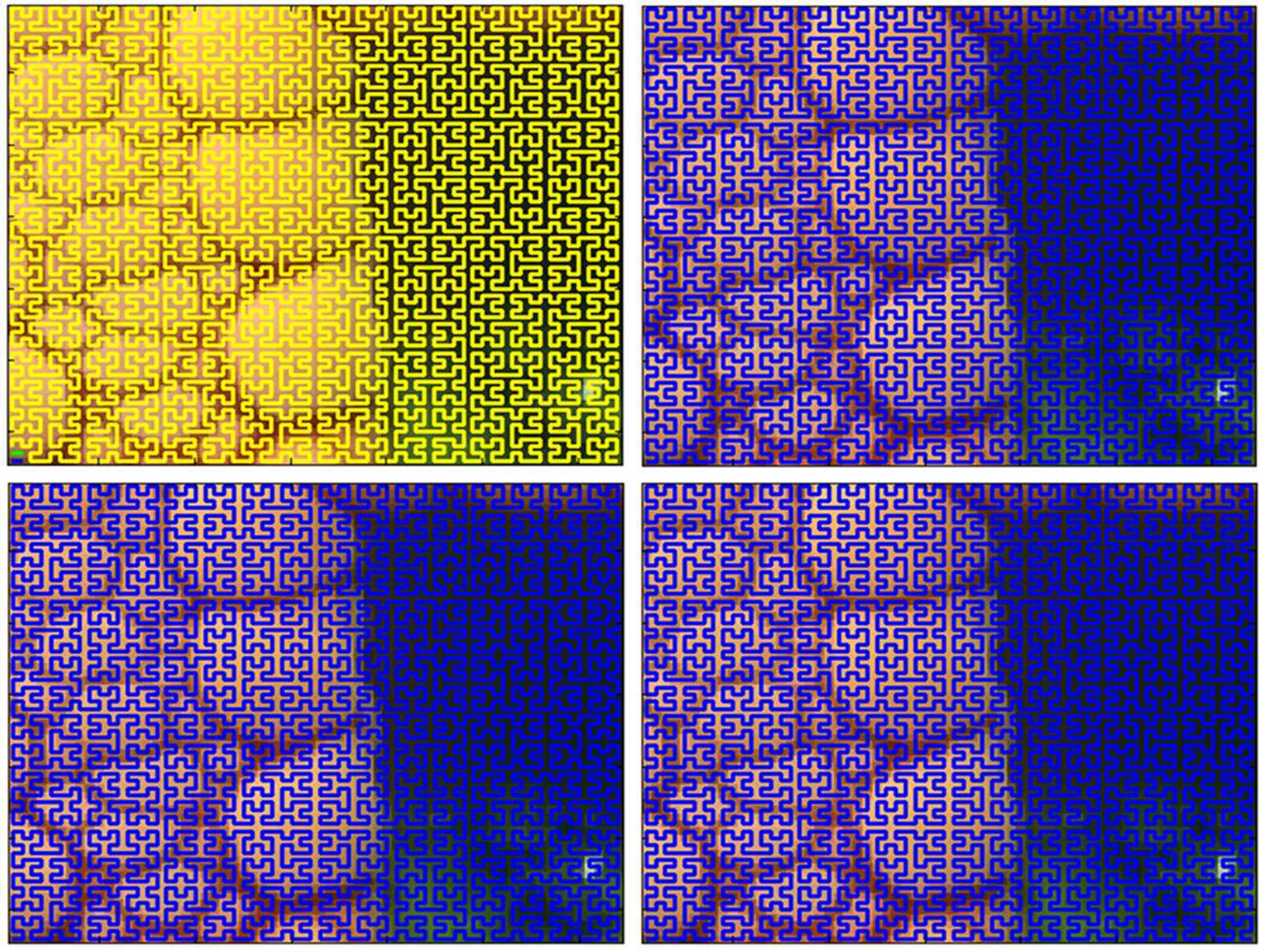}
\caption{Decomposed surfaces using univariate inequality with
  $Npar = 0.95$ and for $\epsilon$ equal to $4$ (top left),
  $8$ (top right),$16$ (bottom left) and $32$ (bottom right).} 
\label{fig:ds_uni}
\end{center}
\end{figure}
\begin{lemma}
Decompositions $\mathcal{D}$ are bounded by lower probability bound of $1 - (\mathcal{N}/\epsilon)$ given that $\epsilon \geq \mathcal{N}$. \label{lem:01}
\end{lemma}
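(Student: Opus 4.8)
The plan is to obtain the claim as a direct specialization of the multivariate Chebyshev inequality in equation~(\ref{equ:ti}). First I would note that the decomposition $\mathcal{D}$ is, by definition, exactly the quadratic form $(X_{t} - E[X])^{T} \Sigma^{-1} (X_{t} - E[X])$ that appears inside the probability statements of~(\ref{equ:ti}). Setting the generic random vector $X$ equal to the tested surface point $X_{t}$ and reading off the second line of~(\ref{equ:ti}) then gives $\mathcal{P}\{\mathcal{D} < \epsilon\} \geq 1 - \mathcal{N}/\epsilon$, which is precisely the asserted lower probability bound for the event that the decomposition satisfies the inequality criterion. This already contains the entire analytic content of the statement; the remaining work is purely interpretive.

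Second, I would justify the role of the side condition $\epsilon \geq \mathcal{N}$. Because any probability lies in the interval $[0,1]$, the quantity $1 - \mathcal{N}/\epsilon$ is a meaningful (non-negative) bound only when $\mathcal{N}/\epsilon \leq 1$, i.e. when $\epsilon \geq \mathcal{N}$; otherwise the right-hand side is non-positive and the inequality, while still true, is vacuous. Thus the hypothesis $\epsilon \geq \mathcal{N}$ is exactly what promotes the bound from a trivially satisfied statement into a genuine lower guarantee on the probability that $X_{t}$ is absorbed into the current surface, and I would state this explicitly to close the argument.

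The only point requiring care --- rather than a genuine obstacle --- is the well-definedness of $\mathcal{D}$, which presupposes that the covariance matrix $\Sigma$ assembled from the points of the existing surface is invertible. I would observe that $\Sigma^{-1}$ is already invoked in~(\ref{equ:ti}), so invertibility is inherited from the hypotheses of the cited inequality; for a surface carrying at least $\mathcal{N}$ affinely independent $\mathcal{N}$-dimensional observations $\Sigma$ is positive definite and the quadratic form is well defined. With these observations the lemma follows immediately, requiring no further computation beyond the substitution $X = X_{t}$ into the established bound.
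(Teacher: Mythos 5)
Your proof matches the paper's own justification: both read the bound $\mathcal{P}\{\mathcal{D} < \epsilon\} \geq 1 - \mathcal{N}/\epsilon$ directly off the second line of the multivariate Chebyshev inequality in equation~(\ref{equ:ti}) and observe that the hypothesis $\epsilon \geq \mathcal{N}$ is exactly what keeps the bound from being vacuously negative. Your added remark on the invertibility of $\Sigma$ is a sensible extra precaution (the paper only addresses this later, via the pseudo-inverse in the algorithm), but the core argument is the same.
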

Not only does it hold true for $\epsilon \geq \mathcal{N}$ but also for $\epsilon < \mathcal{N}$. But the probability being greater than a negative value is always true and thus $\epsilon = \mathcal{N}$ forms the lower bound. As $\epsilon \rightarrow \infty$, $\mathcal{P}(\mathcal{D} < \epsilon) \rightarrow 1$. \par
\begin{lemma}
The value of $\epsilon$ reduces the size of the sample from
$\mathcal{M}$ to an upper bound of $\mathcal{M}/\epsilon$ probabilistically with a lower bound of $1 - (\mathcal{N}/\epsilon)$. Here $\mathcal{M}$ is the number of pixels in a $2$D matrix. \label{lem:02}
\end{lemma}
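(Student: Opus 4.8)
The plan is to treat the $\mathcal{M}$ pixels of the $2$D matrix, linearized into a single sequence by the HSFC of Section~\ref{sec:hsfc}, as the raw sample, and to identify the \emph{reduced} sample with the collection of bounded surfaces produced by the growth rule of Section~\ref{sec:mci}. Since a new surface is created precisely when a candidate point fails the membership criterion $\mathcal{D} < \epsilon$, the number of surfaces equals the number of such failures (up to an additive boundary term of one). First I would introduce, for each position $i$ along the curve, the indicator $Y_i = \mathbf{1}\{\mathcal{D}_i \geq \epsilon\}$ recording whether the $i$-th point lies outside the surface against which it is tested, so that the reduced sample size is $S = 1 + \sum_i Y_i$ and the task is to bound $S$.

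Next I would bound $E[S]$ by linearity of expectation, which conveniently sidesteps any independence assumption: $E[S] = 1 + \sum_i \mathcal{P}\{\mathcal{D}_i \geq \epsilon\}$. The key input is the complementary form of Inequality~(\ref{equ:ti_imp}), which supplies the marginal bound $\mathcal{P}\{\mathcal{D}_i \geq \epsilon\} \leq \mathcal{N}/\epsilon$ at every $i$. The clean factor in the statement is then exposed by working with the normalized decomposition: since $E[\mathcal{D}_i] = \mathrm{tr}(\Sigma^{-1}\Sigma) = \mathrm{tr}(I_{\mathcal{N}}) = \mathcal{N}$, Markov's inequality applied to $\mathcal{D}_i/\mathcal{N}$ yields $\mathcal{P}\{\mathcal{D}_i \geq \mathcal{N}\epsilon\} \leq 1/\epsilon$, so that over the $\mathcal{M}$ positions at most $\mathcal{M}/\epsilon$ points are expected to break their surface. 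Summing and absorbing the negligible additive unit for large $\mathcal{M}$ gives the upper bound $\mathcal{M}/\epsilon$ on the reduced sample size.

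Finally, the confidence attached to this count is inherited directly from Lemma~\ref{lem:01}: each retained point meets its membership criterion with probability at least $1 - (\mathcal{N}/\epsilon)$, which is exactly the lower probability bound reported in the statement. I would close by checking the limiting regime $\epsilon \to \infty$, under which the per-point failure probability vanishes, the expected number of breaks (bounded by $\mathcal{M}/\epsilon$) tends to zero, and hence $S \to 1$, consistent with all pixels merging into a single surface as the criterion becomes vacuous.

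The step I expect to be the main obstacle is the passage from a \emph{per-point} probability bound to a bound on the \emph{total} surface count. The HSFC imposes strong spatial correlation, and each membership decision is made against a running, data-dependent mean and covariance of the current surface rather than a fixed reference distribution, so the $\mathcal{D}_i$ form a sequentially defined, non-stationary sequence. Linearity of expectation rescues the \emph{expectation} bound because it needs only the marginal Chebyshev inequality at each step; but promoting this to the high-probability reading the lemma suggests would require a union bound or a martingale concentration argument, and the genuinely delicate point is justifying that each marginal actually obeys the stated inequality when $E[\cdot]$ and $\Sigma$ are replaced by their empirical, small-sample estimates on the existing surface.
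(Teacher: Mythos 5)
Be aware first that the paper's own justification of Lemma~\ref{lem:02} is a two-sentence assertion: it simply declares that the decomposition ``leads to reduction in the sample size by a factor of $\epsilon$'' with no counting argument at all, so your proposal is considerably more ambitious than what the paper supplies. Measured against the lemma's actual claim, however, your central step does not deliver the stated factor. With the algorithm's acceptance criterion $\mathcal{D}_i < \epsilon$, the marginal bound furnished by Inequality~(\ref{equ:ti_imp}) is $\mathcal{P}\{\mathcal{D}_i \geq \epsilon\} \leq \mathcal{N}/\epsilon$, so linearity of expectation gives $E[S] \leq 1 + \mathcal{M}\mathcal{N}/\epsilon$, not $\mathcal{M}/\epsilon$. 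Your move to expose the clean $1/\epsilon$ factor --- applying Markov to $\mathcal{D}_i/\mathcal{N}$ using $E[\mathcal{D}_i]=\mathrm{tr}(\Sigma^{-1}\Sigma)=\mathcal{N}$ --- silently replaces the threshold $\epsilon$ by $\mathcal{N}\epsilon$, which bounds the failure rate of a \emph{different} algorithm, one that accepts points with $\mathcal{D}_i < \mathcal{N}\epsilon$. You cannot simultaneously keep the threshold $\epsilon$ used in Algorithm~\ref{alg:final_a} and claim a per-point break probability of $1/\epsilon$: the multivariate Chebyshev bound costs the factor $\mathcal{N}$ precisely because the statistic $\mathcal{D}$ has mean $\mathcal{N}$ rather than $1$. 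As written, your expectation argument therefore proves an upper bound of order $\mathcal{M}\mathcal{N}/\epsilon$ on the reduced sample size, and the lemma's $\mathcal{M}/\epsilon$ would only follow after rescaling $\epsilon$ by $\mathcal{N}$ or reading the bound up to the dimension factor.

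Your closing caveat, on the other hand, identifies the genuinely fatal obstruction that the paper never addresses: in Equation~(\ref{equ:ti_imp}) the mean and covariance are empirical statistics computed from the two-or-more points already on the surface, the membership decisions are sequential and data-dependent along the HSFC, and the inequality of \cite{Chen:2007} is a statement about a random vector with its population mean and covariance, not about an adaptive empirical test on a sample of size two. The paper treats the tail bound as if it applied verbatim to the algorithmic criterion, and its intended reading of the lemma is the purely heuristic one that each surface absorbs roughly $\epsilon$ pixels (consistent with $\ell = \mathcal{M}/\epsilon$ being assumed outright in Theorem~\ref{thm:01}); that is an assumption about typical surface length, not a consequence of the per-point tail bound. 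Your identification of the reduced sample with the surface count, and your observation that the $1-(\mathcal{N}/\epsilon)$ confidence is inherited pointwise from Lemma~\ref{lem:01}, do match the paper's intent --- but to turn either your argument or the paper's into a proof you would need both the missing factor of $\mathcal{N}$ resolved and a justification of the inequality for empirically estimated, sequentially updated parameters.
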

This holds true as the image is decomposed into surfaces which are probabilistically bounded via the Chebyshev inequality. This decomposition leads to reduction in the sample size by a factor of $\epsilon$ while retaining the information content, kudos to the space filling curve traversal. \par
\begin{lemma}
As $\epsilon \rightarrow \mathcal{N}$ the lower probability bound drops to zero, implying large number of small decompositions $\mathcal{D}$ can be achieved. (Vice versa for $\epsilon \rightarrow \infty$) \label{lem:03}
\end{lemma}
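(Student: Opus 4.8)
The plan is to split the statement into its two genuinely distinct components: a purely analytic limit for the lower probability bound, and a qualitative monotonicity argument connecting the threshold $\epsilon$ to the number and size of the decompositions $\mathcal{D}$. The first component is immediate. Starting from the lower bound $1 - (\mathcal{N}/\epsilon)$ established in Lemma~\ref{lem:01} via the second line of Eq.~\ref{equ:ti_imp}, I would observe that $g(\epsilon) = 1 - \mathcal{N}/\epsilon$ is continuous and strictly increasing on the admissible range $\epsilon \in [\mathcal{N}, \infty)$. Evaluating the one-sided limit gives $\lim_{\epsilon \to \mathcal{N}^{+}} g(\epsilon) = 1 - \mathcal{N}/\mathcal{N} = 0$ and $\lim_{\epsilon \to \infty} g(\epsilon) = 1$, which settles the numerical claim together with its vice-versa counterpart.

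Next I would formalize the link between $\epsilon$ and the granularity of the decomposition by appealing to the admission rule of the \emph{Size Of Surface} construction. A candidate point $\vec{t}$ is appended to the current surface precisely when its decomposition value $\mathcal{D}(\vec{t}) = (\vec{t} - E)^{T}\Sigma^{-1}(\vec{t}-E)$ satisfies $\mathcal{D}(\vec{t}) < \epsilon$. For fixed $\vec{t}$, $E$ and $\Sigma$ the admissible set $\{\vec{t} : \mathcal{D}(\vec{t}) < \epsilon\}$ is nested monotone nondecreasing in $\epsilon$: if $\epsilon_{1} < \epsilon_{2}$ then every point admitted at threshold $\epsilon_{1}$ is admitted at $\epsilon_{2}$. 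Hence driving $\epsilon$ to its infimum $\mathcal{N}$ tightens the admission region to its smallest admissible extent, so each surface terminates after the fewest consecutive points, forcing the traversal to spawn many short surfaces over the fixed budget of $\mathcal{M}$ pixels. Quantitatively I would invoke Lemma~\ref{lem:02}: the number of surfaces is bounded by $\mathcal{M}/\epsilon$, which is maximised over admissible $\epsilon$ at $\epsilon = \mathcal{N}$, yielding the upper envelope $\mathcal{M}/\mathcal{N}$ on the decomposition count --- the precise sense in which ``a large number of small decompositions $\mathcal{D}$ can be achieved.'' The $\epsilon \to \infty$ case is the mirror image: the admission region expands toward all of $\mathcal{R}^{\mathcal{N}}$, surfaces grow without restriction, and $\mathcal{M}/\epsilon \to 0$ forces few, large decompositions.

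The main obstacle is that the monotonicity just described is clean only at the level of a single admission test, whereas the actual decomposition is a sequential process in which $E$ and $\Sigma$ are recomputed from the growing surface at every step. Because $\Sigma^{-1}$ reweights the deviation, a larger surface can alter $\mathcal{D}(\vec{t})$ in a way that is not globally monotone in $\epsilon$ along the whole curve. I would therefore argue the surface-size and count relationship step-wise, holding $E$ and $\Sigma$ fixed within a single inclusion test where the nestedness is exact, and then aggregate, treating the resulting statement as a bound in expectation over the HSFC traversal rather than a deterministic identity. This keeps the qualitative claim rigorous while honestly isolating the data-dependent behaviour of the dynamic mean and covariance as the one place where strict monotonicity cannot be asserted pointwise.
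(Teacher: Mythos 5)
Your proposal is correct, and its first paragraph --- taking the one-sided limits of $g(\epsilon) = 1 - \mathcal{N}/\epsilon$ as $\epsilon \rightarrow \mathcal{N}^{+}$ and $\epsilon \rightarrow \infty$ --- already contains the entire substance of the paper's own justification, which amounts to noting that the bound from Lemma \ref{lem:01} vanishes at $\epsilon = \mathcal{N}$ and remarking that values of $\epsilon$ near $\mathcal{N}$ are therefore suited to capturing finer detail. Where you genuinely go further is in the second and third paragraphs: the paper never actually argues the link between the threshold and the number or size of the resulting surfaces, whereas you supply it through the nestedness in $\epsilon$ of the admission region $\{\vec{t} : \mathcal{D}(\vec{t}) < \epsilon\}$ together with the surface-count bound $\mathcal{M}/\epsilon$ from Lemma \ref{lem:02}, maximised at $\epsilon = \mathcal{N}$. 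You also correctly identify the one point where this cannot be made pointwise rigorous --- the mean and covariance are recomputed as the surface grows, so monotonicity in $\epsilon$ holds exactly only within a single inclusion test, not globally along the HSFC traversal --- and you resolve it honestly by aggregating per-step monotonicity into a bound over the traversal. Your version buys a defensible reading of the informal clause ``large number of small decompositions can be achieved,'' which the paper simply asserts; the cost is that the conclusion is an aggregate tendency rather than a deterministic identity, which is a fair reflection of what the algorithm actually guarantees.
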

This gives an insight into the degree to which the image can be decomposed. Where finner details are of import, one may use values of $\epsilon$ tending to $\mathcal{N}$ and vice versa. \par
\begin{theorem}
Let image $\mathcal{I}$ contain $\mathcal{M}$ pixels, with each pixel having $\mathcal{N}$ features . If $\mathcal{I}$ can be decomposed in $\ell = \mathcal{M}/\epsilon$ bounded surfaces via the proposed hybrid model, then in case of the decompositions having equally likely probabilities: (a) $\mathcal{M}$ = $\epsilon^{2}/(\epsilon - \mathcal{N})$ and (b) $\epsilon \in$ open interval ($\mathcal{N},\mathcal{M}$). \label{thm:01}
\end{theorem}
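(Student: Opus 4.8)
The plan is to translate the hypothesis of equally likely decompositions into a single algebraic identity and then read off both parts (a) and (b) from it. First I would record what the hypothesis gives us directly: by assumption the image splits into $\ell = \mathcal{M}/\epsilon$ bounded surfaces, and if these $\ell$ decompositions are equally likely and together exhaust the probability mass, then each one carries probability $1/\ell = \epsilon/\mathcal{M}$. On the other side of the ledger, Lemma~\ref{lem:01} (restated in Lemma~\ref{lem:02}) guarantees each bounded surface $\mathcal{D}$ only the lower probability $1 - (\mathcal{N}/\epsilon)$. The decisive modeling step is to identify these two numbers: in the equally likely regime the conservative Chebyshev bound is taken to be attained, i.e. it is the actual per-surface probability, so that
\begin{equation}
\frac{1}{\ell} \;=\; \frac{\epsilon}{\mathcal{M}} \;=\; 1 - \frac{\mathcal{N}}{\epsilon}.
\end{equation}

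For part (a) I would simply clear denominators. Rewriting the right-hand side as $(\epsilon - \mathcal{N})/\epsilon$ and cross-multiplying yields $\epsilon^{2} = \mathcal{M}(\epsilon - \mathcal{N})$, hence $\mathcal{M} = \epsilon^{2}/(\epsilon - \mathcal{N})$, which is exactly the claimed closed form.

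For part (b) I would extract the admissible range of $\epsilon$ from the same relation. The lower endpoint comes from insisting that the per-surface probability be strictly positive: $1 - \mathcal{N}/\epsilon > 0$ forces $\epsilon > \mathcal{N}$; equivalently, the expression $\epsilon^{2}/(\epsilon - \mathcal{N})$ of part (a) is finite and positive precisely when $\epsilon > \mathcal{N}$, diverging as $\epsilon \to \mathcal{N}$ in agreement with Lemma~\ref{lem:03}. The upper endpoint comes from demanding a genuine decomposition into more than one surface, namely $\ell = \mathcal{M}/\epsilon > 1$, which is equivalent to $\epsilon < \mathcal{M}$; substituting the formula of part (a), one checks that $\epsilon < \epsilon^{2}/(\epsilon - \mathcal{N})$ collapses to $\mathcal{N} > 0$, so the upper bound is automatically consistent. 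Together these give $\epsilon \in (\mathcal{N}, \mathcal{M})$.

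The main obstacle here is not computational but interpretive. The whole argument hinges on promoting the Chebyshev \emph{inequality} of Lemma~\ref{lem:01} to the displayed \emph{equality}, and this is legitimate only under a careful reading of ``equally likely decompositions.'' I would therefore spend most of the effort arguing that the equally likely regime is exactly the case in which the lower probability bound $1 - (\mathcal{N}/\epsilon)$ is saturated per surface, so that equating it with $1/\ell$ is warranted; once that identification is justified, both conclusions are elementary rearrangement.
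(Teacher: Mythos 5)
Your proposal follows essentially the same route as the paper: both treat the $\ell = \mathcal{M}/\epsilon$ decompositions as disjoint, equally likely events exhausting the probability mass, saturate the Chebyshev lower bound $1 - \mathcal{N}/\epsilon$ as the per-surface probability, and obtain part (a) from the resulting identity $(\mathcal{M}/\epsilon)(1 - \mathcal{N}/\epsilon) = 1$, with part (b) read off from positivity of the quantities involved (the paper phrases the endpoint exclusions as contradictions via $\epsilon = \mathcal{N}/k$ and $\epsilon = k\mathcal{M}$, but the content is identical). Your explicit flagging of the interpretive step --- promoting the inequality of Lemma~\ref{lem:01} to an equality under the ``equally likely'' hypothesis --- is exactly the step the paper performs, though more tersely.
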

\begin{proof}
Since the $\mathcal{I}$ can be decomposed into $\ell$ bounded surfaces, it is known that the decompositions are \emph{disjoint} sets. Let $\mathcal{D}_{j}$ (for $j \in \{1,\ell\}$) be such decompositions. Then $\mathcal{I}$ = $\bigcup_{j = 1}^{\ell}\mathcal{D}_{j}$. Considering $\mathcal{I}$ as the universal set, we get: $\mathcal{P}(\mathcal{I}) = \mathcal{P}(\bigcup_{j=1}^{\ell} \mathcal{D}_{j}) = \sum_{j=1}^{\ell} \mathcal{P}(\mathcal{D}_{j})$. From lemma \ref{lem:01}, it is known that $\mathcal{P}(\mathcal{D}_{j} < \epsilon) \geq 1 - (\mathcal{N}/\epsilon)$. In the case that the decompositions are equally likely, the lowest probability for each $\mathcal{D}_{j}$ evaluates to $1 - (\mathcal{N}/\epsilon)$.\par
Thus, $\sum_{j = 1}^{\ell}\mathcal{P}(\mathcal{D}_{j})$ = $\ell(1 - \mathcal{N}/\epsilon)$ = $(\mathcal{M}/\epsilon)(1 - \mathcal{N}/\epsilon)$. Since $\mathcal{P}(\mathcal{I}) = 1$ and $\mathcal{P}(\mathcal{I})$ = $\sum_{j=1}^{\ell} \mathcal{P}(\mathcal{D}_{j})$, it implies that $(\mathcal{M}/\epsilon)(1 - \mathcal{N}/\epsilon)$ = $1$. Simplifying the foregoing formulation leads to $\mathcal{M} = \epsilon^{2}/(\epsilon - \mathcal{N})$. This proves the part (a) of the theorem. \par
From part (a), it can be clearly seen that $\epsilon - \mathcal{N} \neq 0$, lest part (a) would be invalid. Thus $\epsilon \neq \mathcal{N}$. Similarly, if $\epsilon = \mathcal{M}$, then on simplification of part (a) evaluates to $\mathcal{N} = 0$. Again this cannot be the case as an image will have atleast one feature in terms of intensity. Thus $\epsilon \neq \mathcal{M}$. Let $k > 1$, then for $\epsilon = \mathcal{N}/k$ part (a) shows that the ratio of $\mathcal{M}/\mathcal{N}$ is a negative quantity. This cannot be possible as both numbers in the ratio are positive numbers. Thus $\epsilon > \mathcal{N}$. Again, for the same condition of $k$, if $\epsilon = k\mathcal{M}$, part (a) on evaluation shows the ratio of $\mathcal{M}/\mathcal{N}$ to be a negative quantity. This again is a contradiction given the state of $\mathcal{M}$ and $\mathcal{N}$. Thus $\epsilon < \mathcal{M}$. Thus $\epsilon$ is strictly bounded in the open interval ($\mathcal{N},\mathcal{M}$). This finishes proof for part (b). \qed
\end{proof}
\begin{theorem}
If an image $\mathcal{I}$ is decomposed into bounded surfaces s.t. the latter have equally likely probabilities, then theoretically the maximum value of both $\epsilon$ (Chebyshev parameter) and $\mathcal{N}$ (number of feature dimensionality of $\mathcal{I}$) is of the order of $\sqrt{\mathcal{M}}$. \label{thm:02}
\end{theorem}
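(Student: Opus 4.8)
The plan is to start from the closed-form relation established in part (a) of Theorem~\ref{thm:01}, namely $\mathcal{M} = \epsilon^{2}/(\epsilon - \mathcal{N})$, and to read it as an integer constraint tying together the three positive quantities $\mathcal{M}$, $\epsilon$ and $\mathcal{N}$. Clearing the denominator gives the single working identity $\mathcal{M}(\epsilon - \mathcal{N}) = \epsilon^{2}$. From part (b) of Theorem~\ref{thm:01} I may assume $\mathcal{N} < \epsilon < \mathcal{M}$, so the factor $\epsilon - \mathcal{N}$ is a strictly positive integer, i.e. $\epsilon - \mathcal{N} \geq 1$, which is the only structural fact I will need.

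First I would extract the scale of $\epsilon$ in the regime where the two parameters are pushed to be as large as possible while staying comparable. Substituting the smallest admissible gap $\epsilon - \mathcal{N} = 1$ collapses the identity to $\mathcal{M} = \epsilon^{2}$, whence $\epsilon = \sqrt{\mathcal{M}}$; and since then $\mathcal{N} = \epsilon - 1 = \sqrt{\mathcal{M}} - 1$, both parameters sit at $\Theta(\sqrt{\mathcal{M}})$ simultaneously. I would justify that this is precisely the extremal situation the statement intends: each bounded surface carries on the order of $\epsilon$ points yet must support an $\mathcal{N}\times\mathcal{N}$ covariance $\Sigma$, so driving $\mathcal{N}$ up toward $\epsilon$ (equivalently shrinking the gap $\epsilon - \mathcal{N}$ to its minimum) is what maximises the usable feature dimensionality, and that maximum is $\sqrt{\mathcal{M}} - 1$.

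To make the two bounds symmetric I would rewrite the identity as $\mathcal{N} = \epsilon - \epsilon^{2}/\mathcal{M}$ and track how $\mathcal{N}$ and $\epsilon$ move together as the gap $d = \epsilon - \mathcal{N} = \epsilon^{2}/\mathcal{M}$ varies. Observing that $d \geq 1$ forces $\epsilon \geq \sqrt{\mathcal{M}}$, while requiring $\epsilon$ and $\mathcal{N}$ to remain of the same order forces $d = O(1)$ and hence $\epsilon = O(\sqrt{\mathcal{M}})$, pins $\epsilon$ to the scale $\sqrt{\mathcal{M}}$ from both sides. Substituting back then confirms $\mathcal{N} = \sqrt{\mathcal{M}} - 1 = \Theta(\sqrt{\mathcal{M}})$, completing the order estimate for both quantities.

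The hard part will be justifying the word ``maximum'' rigorously, because if $\epsilon$ and $\mathcal{N}$ are each optimised in isolation the identity permits $\epsilon$ as large as nearly $\mathcal{M}$ (take $\mathcal{N}\to 1$) and $\mathcal{N}$ as large as $\mathcal{M}/4$ (the double-root case $\epsilon = \mathcal{M}/2$); the $\sqrt{\mathcal{M}}$ ceiling emerges only once one insists that the two be maximised \emph{jointly}, i.e. kept comparable with the minimal gap $\epsilon - \mathcal{N} = 1$. I would therefore spend the main effort stating this balancing condition cleanly, via the covariance-support requirement $\epsilon \gtrsim \mathcal{N}$ on each surface, and showing that it is exactly this condition together with $\epsilon - \mathcal{N} \geq 1$ that delivers the two-sided estimate $\epsilon, \mathcal{N} = \Theta(\sqrt{\mathcal{M}})$.
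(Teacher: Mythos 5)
Your route is genuinely different from the paper's. The paper never works with the gap $\epsilon-\mathcal{N}$ directly: it parametrises $\epsilon=\mathcal{M}/k$, derives $\mathcal{M}/\mathcal{N}=k^{2}/(k-1)$ from Theorem~\ref{thm:01}(a), then imposes the extra hypothesis $\epsilon=\mathcal{M}/\mathcal{N}$ (equivalently, the number of surfaces $\ell=\mathcal{M}/\epsilon$ equals $\mathcal{N}$), splits $k^{2}/(k-1)=k+k/(k-1)$ into complete and partial fractions, sandwiches $k+1<\mathcal{M}/k<k+2$, and solves the two quadratics to conclude $k=\Theta(\sqrt{\mathcal{M}})$, hence $\epsilon=\mathcal{M}/k$ and $\mathcal{N}=k$ are both of order $\sqrt{\mathcal{M}}$. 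Your substitution of the minimal gap $\epsilon-\mathcal{N}=1$ into $\mathcal{M}(\epsilon-\mathcal{N})=\epsilon^{2}$ reaches the same conclusion in one line and is cleaner; note also that the paper's auxiliary hypothesis is your balancing assumption in disguise, since $\epsilon=\mathcal{M}/\mathcal{N}$ forces $\epsilon-\mathcal{N}=k/(k-1)\in(1,2)$. Your one-sided observation that $\epsilon-\mathcal{N}\geq 1$ implies $\epsilon\geq\sqrt{\mathcal{M}}$ is a nice addition that the paper does not have.

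There is, however, a concrete gap in your third paragraph: the claim that keeping $\epsilon$ and $\mathcal{N}$ of the same order forces $d=\epsilon-\mathcal{N}=O(1)$ is false, and your own fourth paragraph supplies the counterexample. The double-root configuration $\epsilon=\mathcal{M}/2$, $\mathcal{N}=\mathcal{M}/4$ satisfies the identity, has $\epsilon$ and $\mathcal{N}$ comparable (indeed $\epsilon=2\mathcal{N}$, so the covariance-support condition $\epsilon\gtrsim\mathcal{N}$ holds as well), and yet has $d=\mathcal{M}/4$. So neither ``same order'' nor ``$\epsilon\gtrsim\mathcal{N}$'' delivers the upper bound $\epsilon=O(\sqrt{\mathcal{M}})$; the only condition that does is $\epsilon-\mathcal{N}=O(1)$ itself, which is precisely what must be postulated rather than derived. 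You correctly sense that justifying ``maximum'' is the hard part, but the balancing condition you propose to use does not close it. To be fair, the paper does not close it either: its proof computes the $\sqrt{\mathcal{M}}$ order only under the unstated hypothesis $\epsilon=\mathcal{M}/\mathcal{N}$ and never shows that this configuration maximises anything. If you write your version up, state the minimal-gap (or $\ell=\mathcal{N}$) assumption explicitly as a hypothesis and drop the claim that it follows from comparability alone.
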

\begin{proof}
From theorem \ref{thm:01}, it is known that $\mathcal{M} = \epsilon^{2}/(\epsilon - \mathcal{N})$, when the probabilities of the decomposed surfaces are equally likely. It is also known that $\mathcal{N} < \epsilon < \mathcal{M}$. Let $k > 1$ and $\epsilon$ decrease harmonically via $\epsilon = \mathcal{M}/k$ for $k$ from $2$ onwards. Then simplifying part (a) of theorem \ref{thm:01} gives $\mathcal{M}/\mathcal{N} = k^{2}/(k - 1)$. \par
Now, if $\epsilon = \mathcal{M}/\mathcal{N}$, then $\mathcal{M}/k = k^{2}/(k-1)$. The fraction on the right hand side can be segregated into complete and partial fractions as $k + k/(k-1)$. It is known that $k+1 < k + k/(k-1) < k+2$. Then $k+1 < \mathcal{M}/k < k+2$. Equating for both inequalities around $\mathcal{M}/k$, the value of $k$ lies between $\sqrt(1+\mathcal{M}) + 1$ and $(\sqrt(1 + 4\mathcal{M}) - 1)/2$. Taking the order of maximum value of $k$ as $\sqrt(\mathcal{M})$, $\epsilon$ = $\mathcal{M}/k$ = $\sqrt(\mathcal{M})$ and $\mathcal{N}$ = $k$ = $\sqrt(\mathcal{M})$.\qed
\end{proof}
\begin{figure}[!t]
\begin{center}
\includegraphics[width=8.5cm,height=6cm]{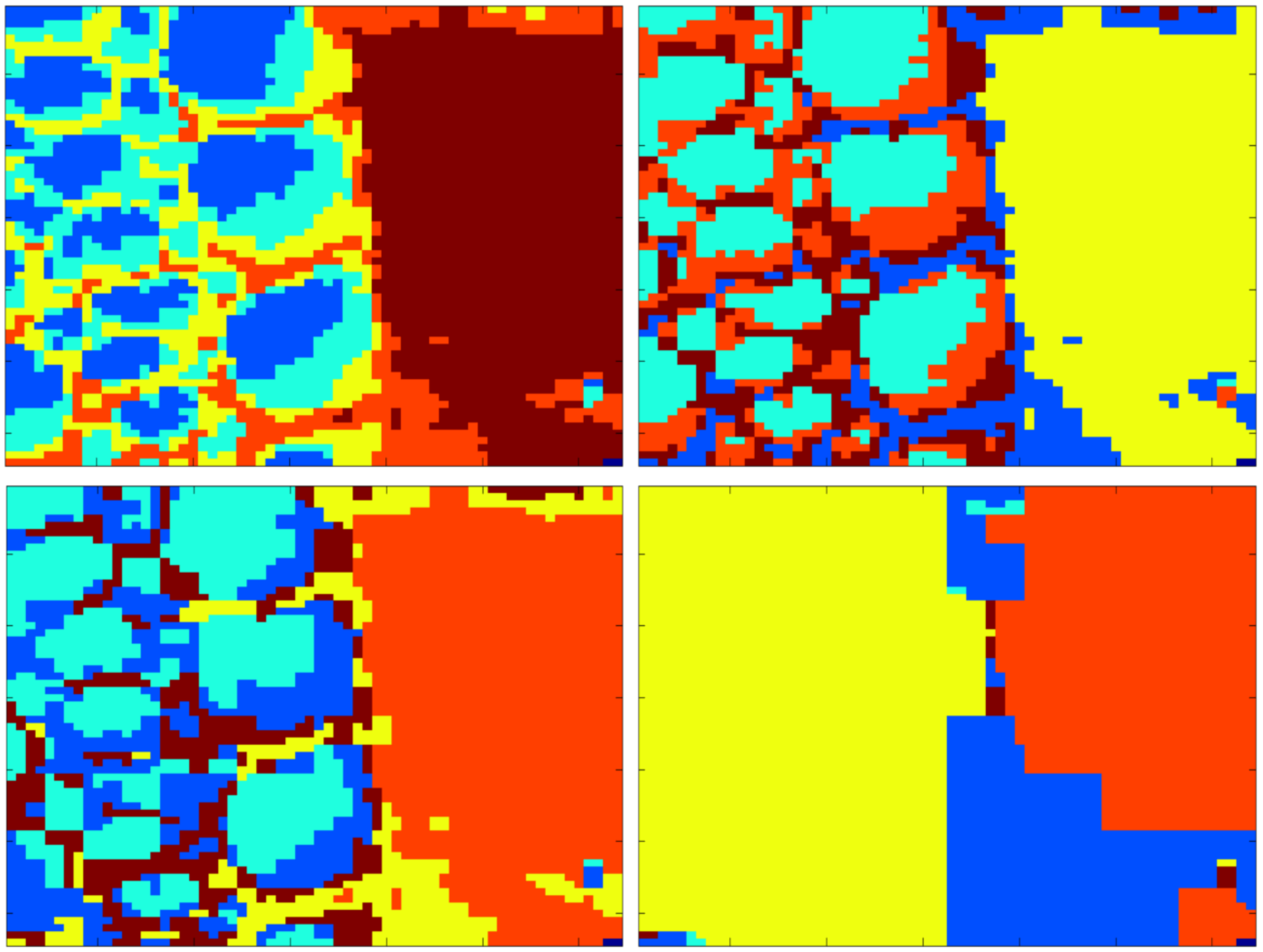}
\caption{Image with $5$ clusters using $k$-means (cityblock distance, $100$ replicates and $1000$ iterations) on decomposed surfaces generated via multivariate inequality with $Npar = 0.95$ and for $\epsilon$ equal to $4$ (top left), $8$ (top right),$16$ (bottom
  left) and $32$ (bottom right).}
\label{fig:c5_multi}
\end{center}
\end{figure}
Theorem \ref{thm:01} shows how the sample size $\mathcal{M}$ is related to the dimensionality of feature space $\mathcal{N}$ via the Chebyshev parameter $\epsilon$. Implicitly, it also states that the sample size $\mathcal{M}$ must be greater than $\mathcal{N}$ and the the value of $\epsilon$ lies in the open interval $(\mathcal{N},\mathcal{M})$. These tight bounds in an idealistic case show that the model is effective in a theoretical sense. The second theorem builds on the first and shows that the maximum theoretical value of the parameter $\epsilon$ is of the order of square root of the sample size $\mathcal{M}$ and so is the dimensionality of the feature space $\mathcal{N}$. In an ideal case of equal likelihood, this maximum value gives an upper bound on $\epsilon$ as well as $\mathcal{N}$, such that the decompositions are uniformally spread in the higer dimensional space. \par

\subsection{Clustering On Bounded surfaces}\label{sec:clust}
Once the image has been decomposed into bounded surfaces, segmentation of the image is done on the average value of surfaces via $k$-means algorithm in Matlab with a certain number of pre-defined clusters. The cityblock distance is used as a metric for the kmeans and the number of replicates is of the order of $100$ with $1000$ iterations for the $k$-means. The reason for using a high number repilcates and iterations is to avoid getting stuck in local solutions. The average values are computed by taking the mean of the $\mathcal{N}$D points that constitute the bounded surfaces. These values are considered to be robust as the surfacess themselves are bounded on the first place probabilistically taking into account the variability in intensity behaviour. \par
\begin{figure}[!t]
\begin{center}
\includegraphics[width=10cm,height=7cm]{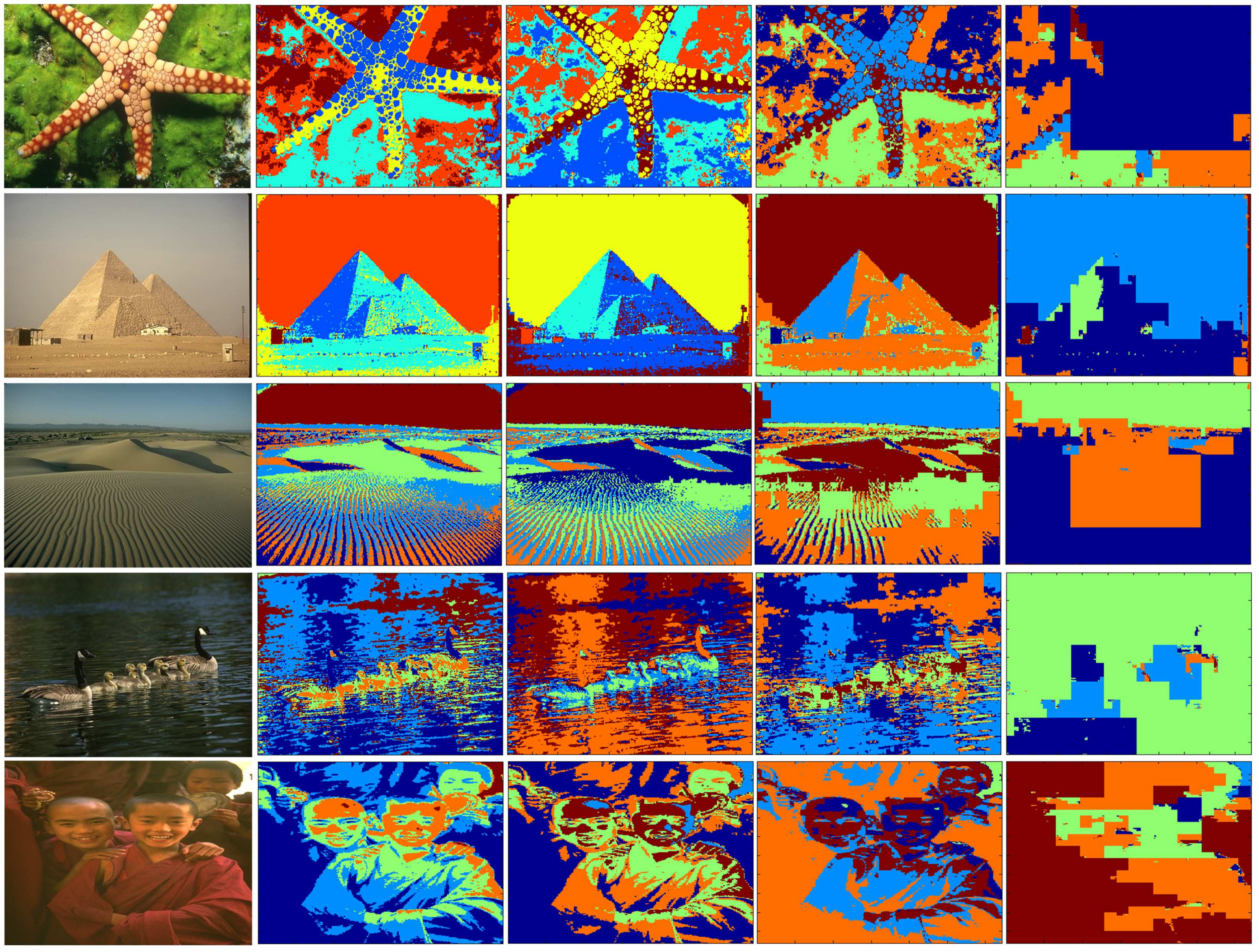}
\caption{Images from \cite{Martin:2001} segmented for
  $Npar = 0.95$ and for $\epsilon$ equal to $4$,
  $8$,$16$ and $32$, from left but one to right with number of clusters $5$.} 
\label{fig:berk_clust_a}
\end{center}
\end{figure}
\begin{figure}[!t]
\begin{center}
\includegraphics[width=10cm,height=7cm]{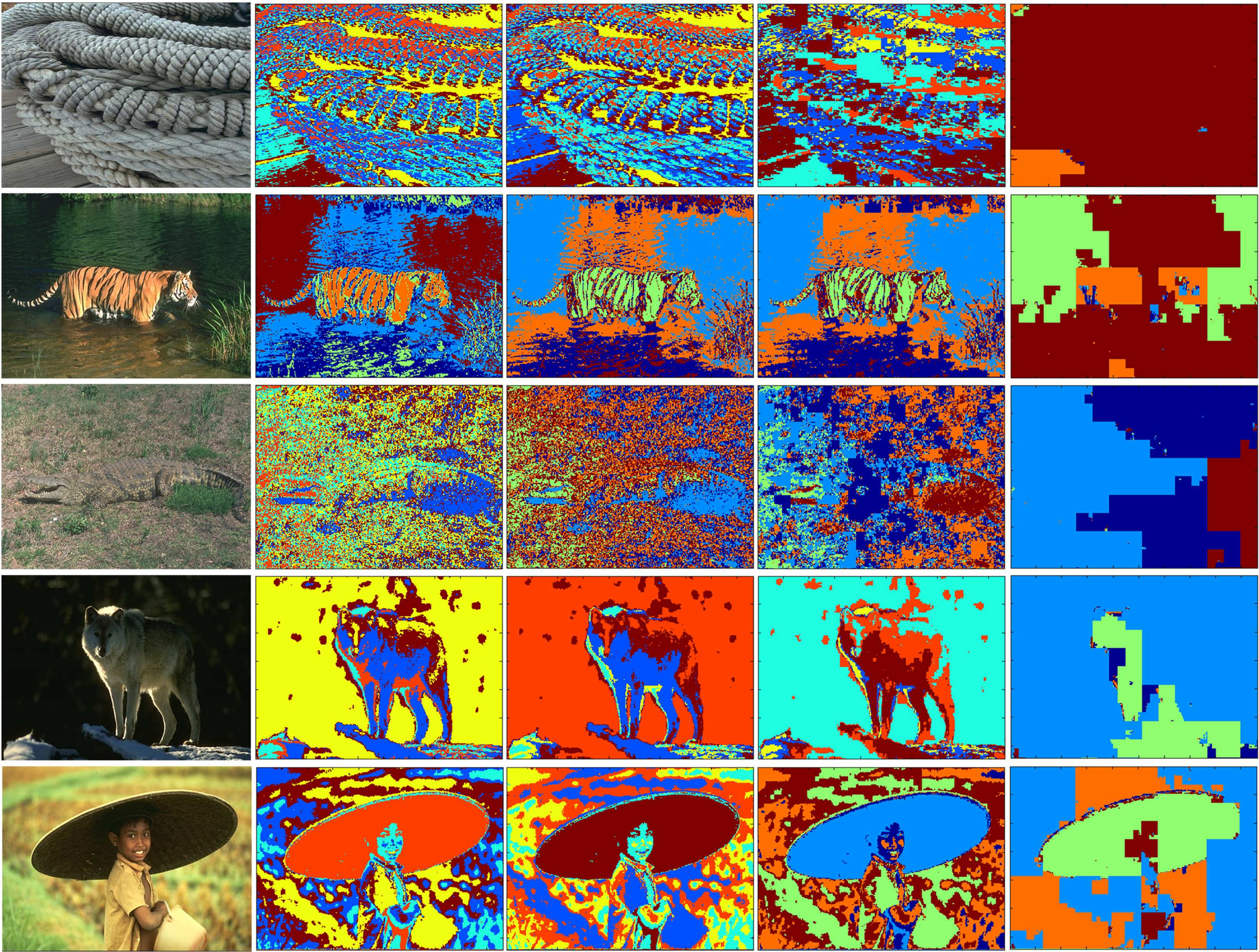}
\caption{Images from \cite{Martin:2001} segmented for
  $Npar = 0.95$ and for $\epsilon$ equal to $4$,
  $8$,$16$ and $32$, from left but one to right with number of clusters $5$.} 
\label{fig:berk_clust_b}
\end{center}
\end{figure}
Figure \ref{fig:c5_multi} shows the corresponding result for different values of $\epsilon$. The figure shows that the quality of segmentation degrades as the value of $\epsilon$ increases, which increases the surface size. For example, the groves on the starfish are captured in greater details for $\epsilon = 4$ than for higher values of the same. A few sample images from \cite{Martin:2001} for which segmented images were generated over different values of $\epsilon \in \{4,8,16,32\}$ have been presented in figures \ref{fig:berk_clust_a} and \ref{fig:berk_clust_b}. The number of clusters was predefined to be $5$. With increasing size of $\epsilon$ values, the amount of decompositions reduce which later affect the quality of the segmentation. This is apparent in the figures as one moves from left to right.The clustering on these bounded surfaces is not only good but also less time consuming as the segmentation are done on a reduced sample size while still retaining the crucial pieces of information. Not that the solution is the best, but results tend to be good quality at first sight.

\section{Decomposition Algorithm}\label{sec:algorithm}
Algorithm \ref{alg:final_a} and its continuity in \ref{alg:final_b} shows the implementation for the decomposing the image into probabilistically bounded surfaces based on space filling curve traversal. The depicted version is for multivariate formulation of the Tchebyshev inequality. Minor change in the form of univariate formulation used separately with each of the $\mathcal{N}$ dimensions would lead to univariate version. \par
Note that the output of the decomposition algorithm is a list of bounded surfaces. Many features could be developed but in this manuscript mean values of all the $\mathcal{N}$D points constituting a surface is taken as a feature vector. This is because the inequality measures the degree of homogeneity of density interaction in $\mathcal{N}$D. \par
\begin{algorithm}[pth]
\caption{Tchebyshev Surfaces}\label{alg:final_a}
\begin{algorithmic}[1]
\Procedure{TchSurf}{$img$, $\epsilon$, $Npar$,
  $formulation$}
\State [$nrows$, $ncols$] $\gets$ size($img$)
\State [$R$,$S$] $\gets$ hlbrtcrv($nrows$, $ncols$) \Comment{HSFC
  coords}
\Statex Initialize Variables
\State $marked_{vertex} \gets$ [] \Comment{Marked pixels per surface}
\State $surface_{list} \gets$ {} \Comment{List of surfaces}
\State $feature_{list} \gets$ {} \Comment{List of features per surface}
\State $surface_{no} \gets 0$ \Comment{Count of surfaces}
\Statex Generate bounded surfaces on HSFC
\State $s_{idx} \gets 1$ \Comment{Index of pt where the curve starts}
\While{$s_{idx} < len-1$} \Comment{$len$ is length of curve}
     \State $e_{idx} \gets s_{idx} + 1$
     \State [$rs$, $cs$] $\gets$ [$R$($s_{idx}$), $S$($s_{idx}$)]
     \State [$re$, $ce$] $\gets$ [$R$($e_{idx}$), $S$($e_{idx}$)]
     \State $pix_{info}$ $\gets$ struct(); \Comment{Initially two pixels}
     \State $pix_{info}.loc$ $\gets$ [$rs$, $cs$; $re$, $ce$]
     \Comment{store locations}
     \Statex Store intensity values per pixel
     \State $surface \gets$ [] 
     \State [$lenidx$, $cols$] $\gets$ size($pix_{info}.loc$)
     \For{$i = 1:lenidx$}
          \State [$r$, $c$] $\gets$ [$pix_{info}.loc(i, 1)$, $pix_{info}.loc(i, 2)$]
          \State $temp \gets$ []
          \For{$j = 1:\mathcal{N}$} \Comment{Features}
               \State $temp \gets$ [$temp$; $img$($c$, $r$, $j$)]
          \EndFor
          \State $surface \gets$ [$surface$, $temp$]
     \EndFor
     \Statex Compute nearness between two initial pixels
     \State $innerprod \gets$ dot($surface(:,1)$, $surface(:,2)$);
     \State $cosval \gets \frac{innerprod}{norm(surface(:,1),2) \times
       norm(surface(:,2),2)}$
     \Statex Check $cosine\theta$ greater than nearness param
     \If{$cosval \geq Npar$}
          \State $e_{idx} \gets e_{idx} + 1$
          \While{$e_{idx} \leq len$}
               \Statex Store next pixel on HSFC
               \State [$re$, $ce$] $\gets$ [$R$($e_{idx}$), $S$($e_{idx}$)]
               \State $pix_{loc}^{end} \gets$ [$re$, $ce$]
               \Statex Store intensity values for pixel
               \State $new_{pix} \gets$ []
               \For{$j = 1:\mathcal{N}$}
                    \State $new_{pix} \gets$ [$new_{pix}$; $img$($ce$, $re$, $j$)]
               \EndFor
\algstore{bkbreak}
\end{algorithmic}
\end{algorithm}
Another point to be aware of is the computation of the measure in multivariate Chebyshev inequality in equation \ref{equ:ti_imp}. Since it requires presence of inverse of the covariance matrix, one may run into problem of sparseness, or inappropriate dimensionality between the number of samples and features. To overcome these problems, the pseudo inverse was computed using the singular value decomposition implementation in matlab. Being fast and effective, the decomposition of the image into bounded surfaces works within a matter of seconds. \par
\begin{algorithm}[pth]
\caption{Tchebyshev Surfaces Continued}\label{alg:final_b}
\begin{algorithmic}[1]
\algrestore{bkbreak}
               \Statex Tchebychevs inequality for fixing length of surf
               \Statex NOTE - rows are dimension and cols are pixels in surface matrix
               \State $surfacemu \gets$ $\mu$($surface^{T}$)
               \State $surfacestd \gets$ std($surface^{T}$)
               \State $covR \gets$ cov($surface^{T}$)
               \State $dev \gets$ ($new_{pix} - surfacemu^{T}$)
               \Statex Compute inequality in equ \ref{equ:ti_imp}
               \State $criterion \gets$ $dev^{T}$ * pinv($covR$) * $dev$ 
               \State $decision \gets 0$
               \If{$criterion < \epsilon$}
                      \State $decision \gets decision + 1$
               \EndIf
               \If{$decision \geq 1$}
                     \State $surface \gets$ [$surface$, $new_{pix}$]
                     \State $pix_{info}.loc \gets$ [$pix_{info}.loc$; $pix_{loc}^{end}$]
                     \State $e_{idx} \gets e_{idx} + 1$
               \Else
                      \State $break$
               \EndIf
          \EndWhile
          \Statex Store the surfaces, associated pixels and features
          \State $surface_{no} \gets surface_{no} + 1$
          \State $surface_{list}${$surface_{no}$} $\gets pix_{info}.loc$
          \Statex Mean per dimension as feature for a surface
          \State $featval \gets$ []
          \For{$i = 1:\mathcal{N}$}
              \State $featval \gets$ [$featval$; $\mu$($surface(i,:)$)]
          \EndFor
          \State $feature_{list}\{surface_{no}\} \gets featval$
          \State $s_{idx} \gets e_{idx}$
     \Else \Comment{If only one pixel is a surface}
          \State $pix_{info}.loc(2,:) \gets$ []
          \State $surface(:,2) \gets$ []
          \State $surface_{no} \gets surface_{no} + 1$
          \State $surface_{list}\{surface_{no}\} \gets pix_{info}.loc$
          \Statex Mean per dimension as a feature for just one pixel
          \State $featval \gets$ []
          \For{$i = 1:\mathcal{N}$}
               \State $featval \gets$ [$featval$; $\mu$($surface(i,:)$)]
          \EndFor
          \State $feature_{list}\{surface_{no}\} \gets featval$
          \State $s_{idx} \gets s_{idx} + 1$
     \EndIf
\EndWhile
\EndProcedure
\end{algorithmic}
\end{algorithm}
\begin{table}[!t]
\begin{center}
\begin{tabular}[c]{|c|c|l|}
\hline
\textbf{Rank} & \textbf{$\mathcal{F}$-Score} & \textbf{Algorithm} \\
\hline
$0$ & $0.79$ & Humans \\
$1$ & $0.63$ & Expectation Maximization (EM) \\
$2$ & $0.59$ & Multivariate Chebyshev (mTch) \\
$3$ & $0.58$ & Mean Shift (mShift) \\
$4$ & $0.50$ & Normalized Cuts (nCuts) \\
$5$ & $0.28$ & Graph Based (grBase) \\
\hline
\end{tabular}
\end{center}
\caption{Summary table of rank of algorithms based on
  $\mathcal{F}$-score generated on $100$ test images in the BSB dataset \cite{Martin:2001}.} 
\label{table:f_score}
\end{table}
\begin{figure}[pth]
\begin{center}
\includegraphics[width=8.5cm,height=20cm]{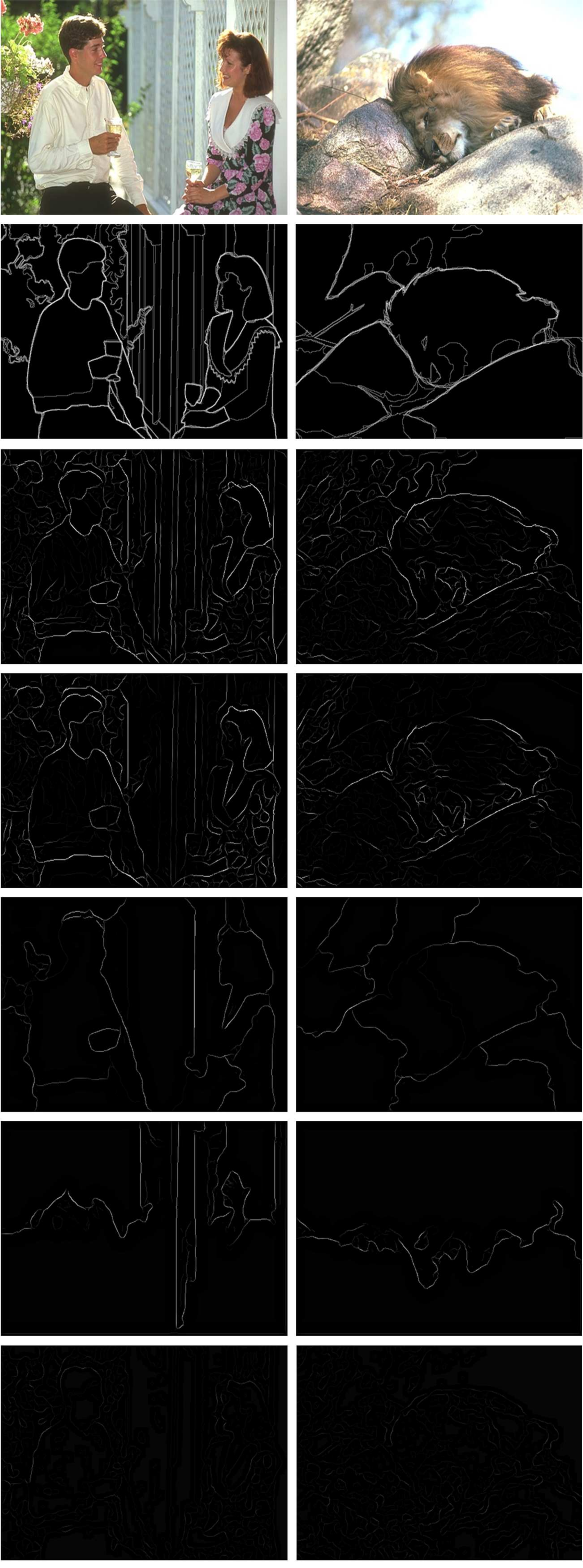}
\caption{Row wise top to bottom: Original image, probabilistic
  boundaries for human segmentation, mTch ($Npar = 0.95$, $\epsilon =
  4$ and number of clusters $10$), EM (number of clusters $10$), nCuts
  (number of clusters $10$), grBase ($\sigma = 0.8$, $k = 300$) and
  mShift ($h_{r} = 8$, $h_{s} = 8$).}
\label{fig:bdbs_01}
\end{center}
\end{figure}
\begin{figure}[pth]
\begin{center}
\includegraphics[width=8.5cm,height=20cm]{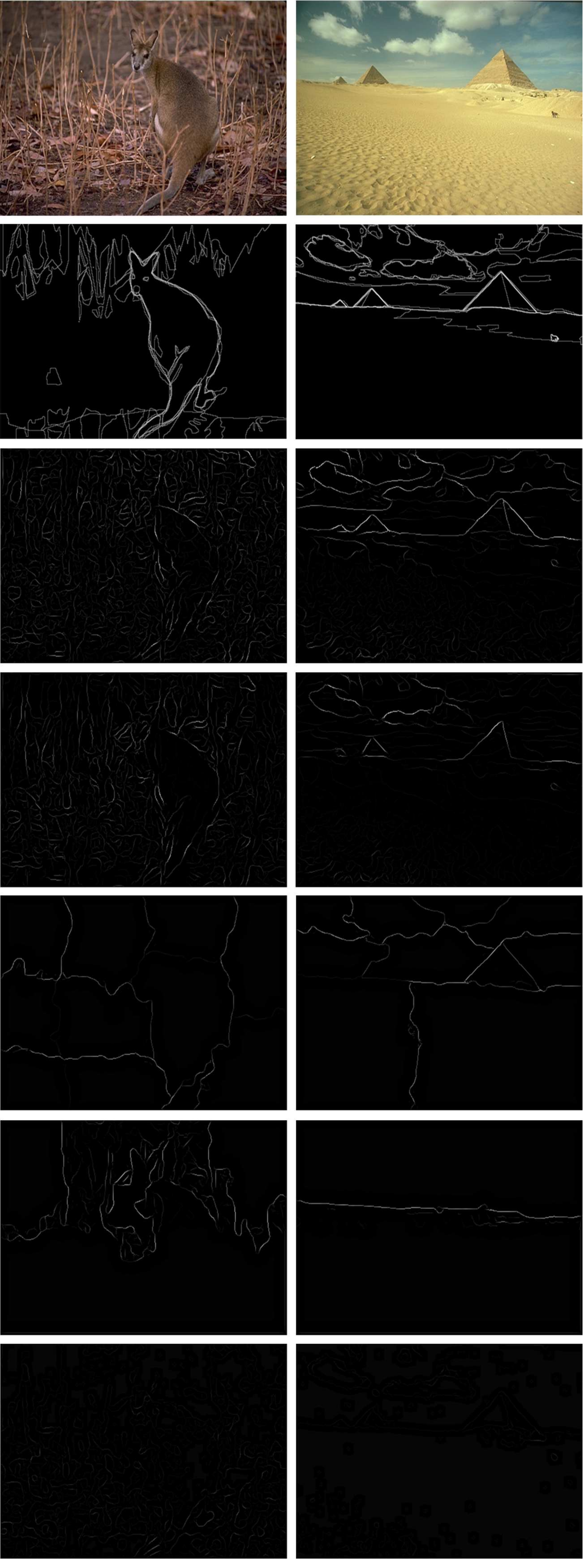}
\caption{Row wise top to bottom: Original image, probabilistic
  boundaries for human segmentation, mTch ($Npar = 0.95$, $\epsilon =
  4$ and number of clusters $10$), EM (number of clusters $10$), nCuts
  (number of clusters $10$), grBase ($\sigma = 0.8$, $k = 300$) and
  mShift ($h_{r} = 8$, $h_{s} = 8$).}
\label{fig:bdbs_02}
\end{center}
\end{figure}

\section{Experiments}\label{sec:experiments}
To test the effectiveness of the multivariate Chebyshev algorithm (mTch), the results obtained on image segmentation were compared with some of the standard existing algorithms. The other algorithms employed were the normalized cuts (nCuts) \cite{Shi:2000}, \cite{Cour:2005}, edge graph based segmentation (grBase) \cite{Felzenszwalb:2004}, the mean shift clustering (mShift) \cite{Comaniciu:2002} and an expectation maximization (EM) based implementation \cite{Herrera:2006}. Figure \ref{fig:pepper_seg} shows the standard peppers image that was segmented into $10$ clusters for mTch, EM and nCuts. For the case of grBase ($\sigma = 0.8$, $k = 300$) and mShift ($h_{r} = 8$, $h_{s} = 8$), parameters values were specified as mentioned in literature. Matlab implementations of grBase and mShift were taken from \cite{Woodford:2009} to produce the results. All algorithms were tested with a fixed parameter value on images with varying content.  This does imply that results per image may not be optimized and would sound a bit unfair, but from another perspective such an experiment also suggests how robust an algorithm is against the variance in content of images, given a fixed parameter value. This outlook holds true when both the grBase and mShift algorithms in this experimental setup donot fair well on fixed parameter value for all images as compared to the proposed mTch algorithm. The $\mathcal{F}$-scores later generated bolster this claim. \par
Probabilistic boundaries were generated based on brightness and texture gradients \cite{Martin:2001} on the segmented images generated from the algorithms under study as well as the available human segmentation. The boundaries were later evaluated to find $\mathcal{F}$-scores. In many of these images, the segmentations based on bounded surfaces gave the good and consistent results. Table \ref{table:f_score} shows the summary of $\mathcal{F}$-scores by the algorithms onthe benchmark. It states that the segementations from the bounded surfaces gave better results than some of the standard well known algorithms. The mTch works second best to EM. A reason for low performace of mTch w.r.t EM may be due the ignorance of important features like surface gradient and reflectance propertices that may add to more discriminative information than using average values of intensities at the current stage. Although the EM gives good performance many times, there are places where it does not capture the nature of surface well. One such case is the texture of the purple cloth and wrinkles in it, in the peppers image (figure \ref{fig:pepper_seg}). For more such cases, the generated results can be made available. \par
\begin{figure}[!t]
\begin{center}
\includegraphics[width=8.5cm,height=10cm]{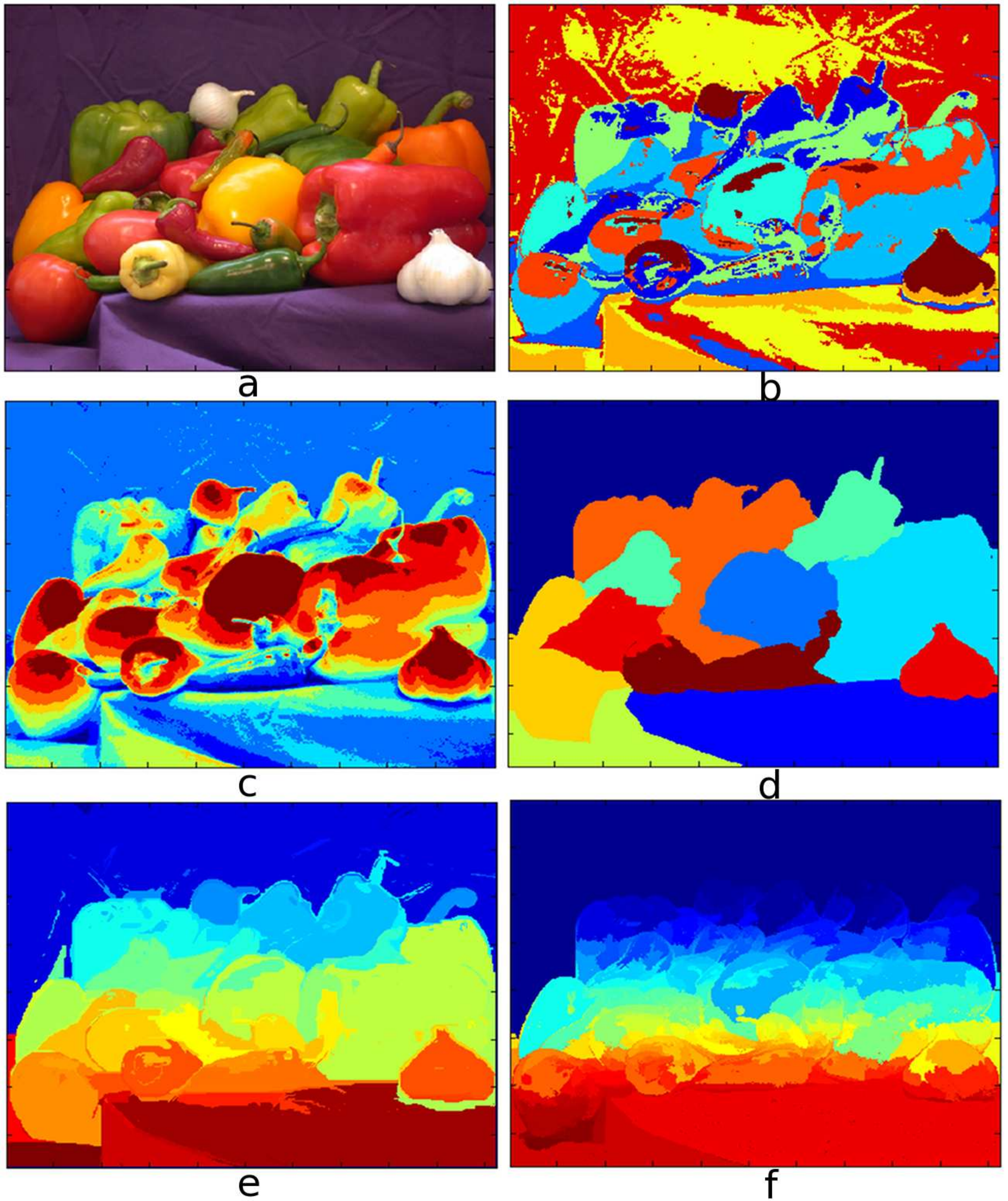}
\caption{(a) Original and segmented images from (b) mTch ($Npar =
  0.95$, $\epsilon = 4$ and number of clusters $10$), (c) EM (number of
  clusters $10$) (d) nCuts (number of clusters $10$) (e) grBase
  ($\sigma = 0.8$, $k = 300$) and (f) mShift ($h_{r} = 8$, $h_{s} = 8$).}
\label{fig:pepper_seg}
\end{center}
\end{figure}
It is widely known that the mShift yields good segments such that no local information is left behind. Too much information sometimes may not be necessary while segmentation are been compared. Figures \ref{fig:bdbs_01} and
\ref{fig:bdbs_02} show the boundaries generated using the routines in the benchmark. Note that in these images, the mTch gave the best results. Also, at first sight it may appear that the boundaries may not have been generated well in case of mShift and grBase, but this is not the case as careful investigation does show their presence. \par
The grBase \cite{Felzenszwalb:2004} works on the basis of a predicate that measures the evidence of a boundary on two ideas, namely: the comparison of intensity differences across boundary and the other, intensity differences among neighbourhood pixels within a region. For this a threshold function is proposed which depends on the components and their respective size. This also means that the parameter used for scaling components would be different for different images, if good segmentation results are desired in grBase. In comparison, the mTch decomposes the images into surfaces which are bounded surfaces or components that preserve the homogeneity in texture using the image content invariant HSFC and the multivariate measure that captures the local neighbourhood interaction in the Tchebyshev inequality. Thus the constant value of parameters for mTch would give good results for different images most of the time. Results in the benchmark dataset prove the issue that for same values of paramters in grBase the segmentation results are inferior to that of mTch of a sample of $100$ test images. \par
It must be noted that the parameter $k$ used in \cite{Felzenszwalb:2004}, scales the size of the component and is not
the minimum component size. Thus it is taken as a constant and no relations of it are derived with respect to the interaction present in the image. Though on similar lines, $\epsilon$ in mTch is also a parameter that defines the degree of control over size of components, but with a bound. The $\epsilon$ characterizes the size of a surface or component
probabilistically, while relating to the control of the degree of texture interaction using the mutivariate measure in the Tchebyshev inequality (equation \ref{equ:ti_imp}). This probabilistic bound is the key relation between the size of component and the texture interaction within, that decomposes the image into homogeneous surfaces in $\mathcal{N}$D by lemma \ref{lem:03}. Clustering on a bunch of homogeneous surfaces is bound to give robust segmentation results. The price that is paid is in terms of time required to cluster the surfaces using the standard $k$-means. \par
Figures \ref{fig:bdbs_01} and \ref{fig:bdbs_02} show the boundaries generated using the routines in the benchmark which are later evaluated to generate $\mathcal{F}$-scores that determine the accuracy of the proposed algorithm as well as that of the other algorithms. Note that in these images, the mTch gave the best results. Also, at first sight it may appear that the boundaries may not have been generated well in case of mShift and grBase, but this is not the case as careful investigation does show their presence. Figure \ref{fig:pr_01}.a and \ref{fig:pr_01}.b shows the precision recall curve for (EM, mTch) and (mTch, mShift), respectively. Figure \ref{fig:pr_02}.a and \ref{fig:pr_02}.b shows the curve for (mTch,
nCuts) and (mTch, grBase), respectively. \par
\begin{figure}[!t]
\begin{center}
\includegraphics[width=8.5cm,height=14cm]{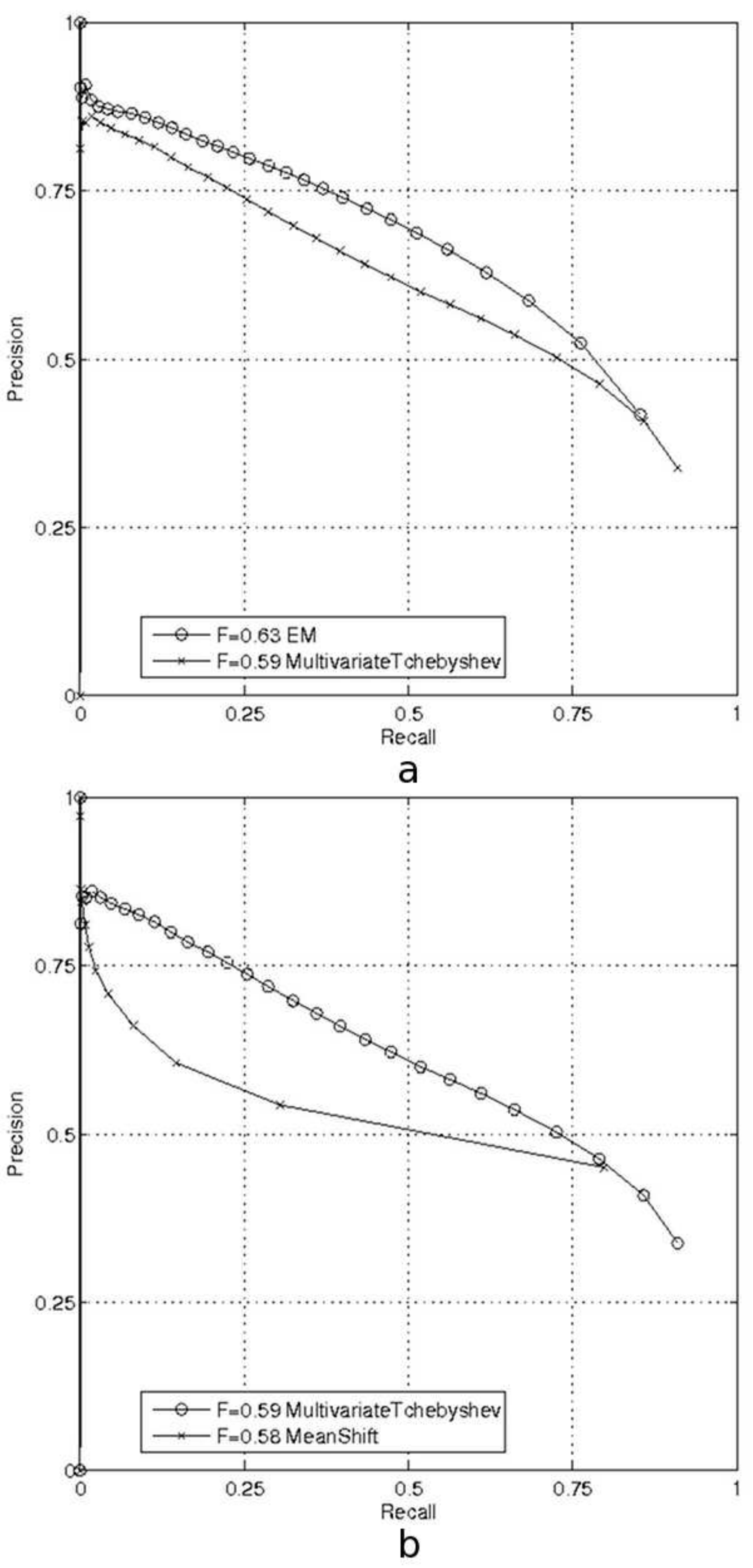}
\caption{Precision Recall curve for (a) EM (number of clusters $10$) and mTch ($Npar =
  0.95$, $\epsilon = 4$ and number of clusters $10$) and (b) mTch
  ($Npar = 0.95$, $\epsilon = 4$ and number of clusters $10$) and
  mShift ($h_{r} = 8$, $h_{s} = 8$).}
\label{fig:pr_01}
\end{center}
\end{figure}
\section{Discussion}\label{sec:discussion}
The proposed method has advantages as well as disadvantages. This section gives an analysis of the intricate points of the algorithm as well as hints as to where improvements can be made. \par 
\emph{Neighbourhood Information} is implicitly considered using a HSFC traversal. The \emph{topography of
  surface} is captured by treating the $\mathcal{N}$ dimensions of the image. For RGB it would be $\mathcal{N} = 3$. \par
\emph{Filtering} of image is not required unless absolutely necessary. The selection of surfaces and checking their validity based on variation of points in $\mathcal{N}$D via Tchebyshev Inequality, makes the operation of filtering computationally redundant. Especially the variation or standard deviation in the $\mathcal{N}$D points is the measure that tolerates the amount of noise that can be taken into account. If the variation is too high, then the initialized surface
may be invalid for further processing due to excessive noise. \par
The \emph{Leakage problem} is that it is not known when to stop to determine the size of the area. This is tackled by determining the length of the surface via the use of the Tchebyshev's Inequality, instead of using thresholds based on image intrinsic intensity values. One aspect that affects the performance of the algorithm is the $k$-means clustering which may require several iterations as well as replicates in order to converge and produce clusters without getting stuck in some local minima. In general terms, if one is not bogged down by the intricacies of the $k$-means then the whole framework works well on surfaces and gives nice segmentation results on the benchmark. The $k$-means on pixels is slower than the $k$-means on the surfaces themselves as the sample size of the former is reduced by a factor of $\epsilon$ while
retaining the local properties using the Hilbert space filling curve. \par
\begin{figure}[!t]
\begin{center}
\includegraphics[width=8.5cm,height=14cm]{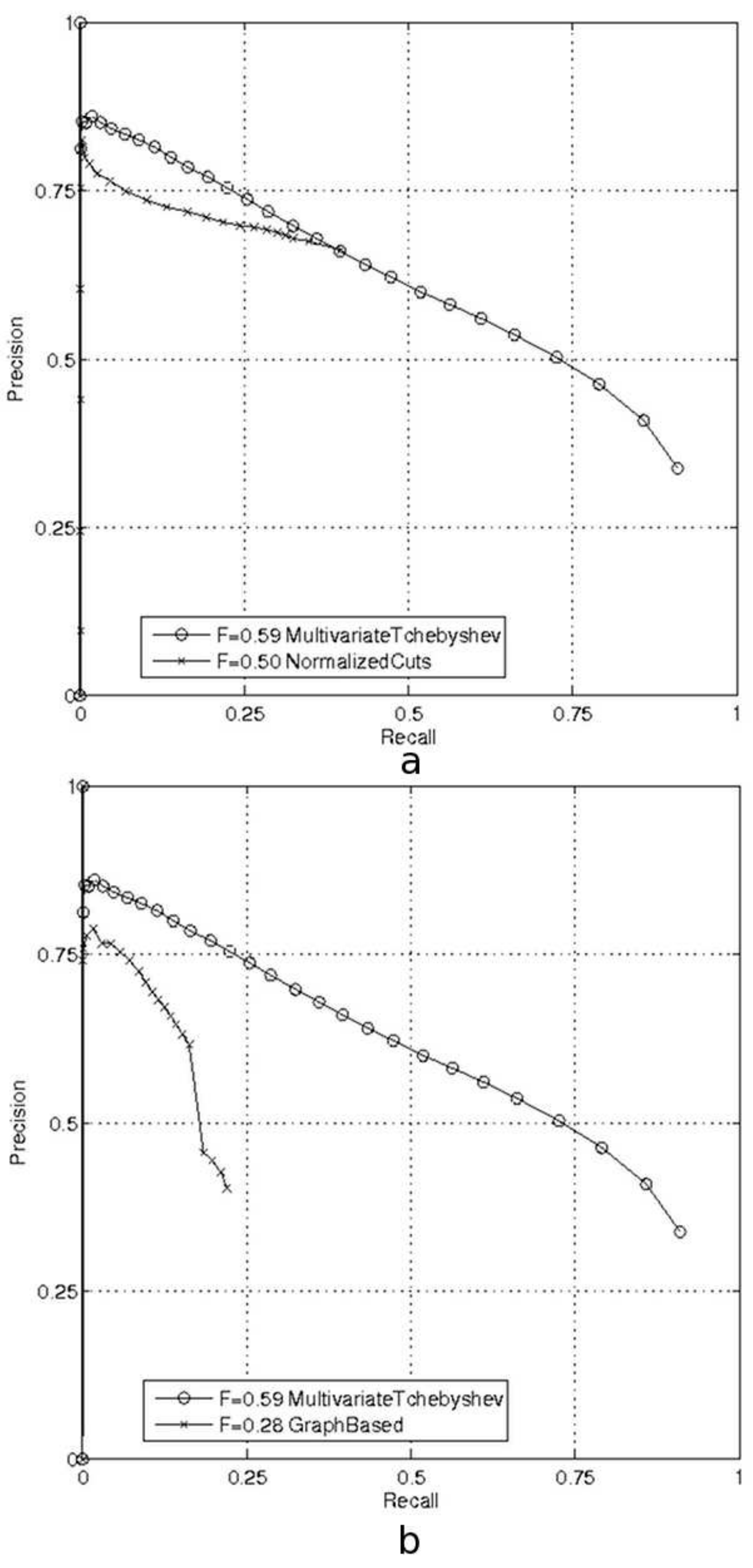}
\caption{Precision Recall curve for (a) mTch ($Npar = 0.95$, $\epsilon
  = 4$ and number of clusters $10$) and nCuts (number of clusters
  $10$) (b) mTch ($Npar = 0.95$, $\epsilon = 4$ and number of clusters
  $10$) and grBase ($\sigma = 0.8$, $k = 300$).}
\label{fig:pr_02}
\end{center}
\end{figure}

The current version of the algorithm also does not optimize the values of $\epsilon$ in equation \ref{equ:ti_imp} and $Npar$ the initialization parameter. Silhouette validation \cite{Rousseeuw:1987} could be one of the methods employed to find the best value of $k$ clusters. The current research does not focus on the number of clusters per se. Instead of focusing as optimisation problem, $Npar$ and $\epsilon$ act as parameters of degree of control over the inclusion of points for surface initialization and size of the surface. The experiments prove that the initial model, without much tunning gives comparable results for segmentation purpose. Robust performance across images with varying context point towards the benefits of using a hybrid model that currently uses a fixed Chebyshev parameter value, a simple measure of similarity and a fixed space filling curve. Intuitively, it can be infered that clustering on these probabilistically bounded homogeneous surfaces will be faster than clustering on pixels. This is because the homogeneous intensity values gets bunddled up together and reduces the sample size of the original problem by a factor of $\epsilon$ (Chebyshev parameter). Because of its generalized framework, the proposed decomposition algorithm can find its application in areas like the generation of textures, combining information from multimodal sources as in biomedical images and processing multidimensional information on space filling curves, to name a few. \par

\section{Conclusion}\label{sec:conclusion}
A novel hybrid model for image decomposition has been proposed. The model works on reduced image representation based on monovariate functions and processes information spread in a multidimensional framework. Initial segmentation results indicate the efficacy of the tool in terms of generalisation and robustness across images with varying content.\par

\section*{Acknowledgement}\label{sec:ack}
The author thanks the Neuroimaging Center (UMCG, Groningen, The Netherlands) for supporting this work and Dr. Charless Fowlkes, assistant professor at Computer Science Department UC Irvine (USA), for sharing the compiled Berkeley Segmentation Benchmark code for Mac intel hardware. \par


\end{document}